\documentclass{article} 
\usepackage{iclr2025_conference,times}
\iclrfinalcopy

\usepackage{amsmath,amsfonts,bm}

\def\eqref#1{equation~\ref{#1}}

\def\1{\bm{1}}

\DeclareMathAlphabet{\mathsfit}{\encodingdefault}{\sfdefault}{m}{sl}
\SetMathAlphabet{\mathsfit}{bold}{\encodingdefault}{\sfdefault}{bx}{n}

\usepackage[utf8]{inputenc} 
\usepackage[T1]{fontenc}    
\usepackage{hyperref}       
\usepackage{url}            
\usepackage{booktabs}       
\usepackage{amsfonts}       
\usepackage{nicefrac}       
\usepackage{microtype}      
\usepackage{xcolor}         
\usepackage{multirow}      
\usepackage{float}

\usepackage{amsmath}
\usepackage{amssymb}
\usepackage{mathtools}
\usepackage{amsthm}
\usepackage{graphicx}
\usepackage{wrapfig}

\theoremstyle{plain}
\newtheorem{theorem}{Theorem}[section]
\newtheorem{proposition}[theorem]{Proposition}
\newtheorem{lemma}[theorem]{Lemma}

\theoremstyle{definition}
\newtheorem{definition}[theorem]{Definition}
\newtheorem{assumption}[theorem]{Assumption}
\theoremstyle{remark}

\usepackage[most]{tcolorbox}
\usepackage{xcolor}
\usepackage{subcaption}
\usepackage{makecell}

\definecolor{myblue}{RGB}{25, 45, 120}
\definecolor{mygray}{RGB}{240, 240, 245}

\newtcolorbox{keytakeaway}[1][]{
  colback=mygray,        
  colframe=myblue,       
  boxrule=1.2pt,         
  arc=6pt,               
  left=10pt,
  right=10pt,
  top=8pt,
  bottom=8pt,
  fonttitle=\bfseries,
  coltitle=black,        
  attach title to upper, 
  title={#1}             
}

\title{Massive Activation Gating Channel in Large Language Models}

\author{Minjia Mao$^1$, Shi Chen$^1$, Bowen Yin$^2$, Xiao Fang$^1$ \\
$^1$ University of Delaware \\
$^2$ Peking University \\
\texttt{\{mjmao,shichen,xfang\}@udel.edu} \\
\texttt{bowenyin@stu.pku.edu.cn} 
}

\begin{document}

\maketitle

\begin{abstract}
Massive activations, a phenomenon in which a small number of hidden channels exhibit exceptionally large magnitudes, are pervasive in large language models (LLMs). However, the mechanism by which a token develops massive activations as it propagates through a pretrained LLM remains poorly understood. In this paper, we find that the emergence of massive activations is controlled by a single channel in the input embedding to a spike feed-forward network (FFN). The position of this channel is fixed for a particular LLM. We name this channel the massive activation gating channel (MAGC).   When the value of the MAGC is sufficiently large (or small, depending on the LLM), the output of the spike FFN exhibits massive activations. Examining six LLMs across four model families and different model sizes, we verify the existence and effect of MAGC. We further provide a theoretical explanation of the mechanism by which MAGC induces massive activations. When the value of MAGC is sufficiently large (or small), the output of a spike FFN asymptotically reduces to a quadratic form that mixes a few columns of the down-projection matrix of the FFN. Since these columns exhibit the shape of massive activations, the output therefore exhibits massive activations.

\end{abstract}

\section{Introduction}

Large Language Models (LLMs) have demonstrated strong capabilities on various tasks \citep{bubeck2023sparks}. While the majority of existing work focuses on external behaviors of LLMs, it is also important to understand the internal mechanisms of LLMs. Among them, attention sink \citep{xiao2023efficient} and massive activations \citep{sun2024massive} have been found in modern LLMs across different model families. Specifically, attention sink refers to the phenomenon in which the first token (or other tokens with little semantic importance) in a sequence receives a disproportionately large amount of attention from subsequent tokens \citep{yu2024unveiling,barbero2025llms}, typically in the middle transformer blocks. The cause of attention sink is usually attributed to massive activations \citep{sun2024massive,yan2024unveiling}, where a few channels (dimensions) in the hidden representation of the first token (or other tokens with little semantic importance) exhibit large activation magnitudes before attention computation. 
It is important to unveil the mechanism underlying massive activations and attention sink, as these phenomena have been widely employed in various critical tasks, including KV-cache optimization \citep{ge2023model}, inference \citep{xiao2023efficient}, quantization \citep{son2024prefixing}, and hallucination mitigation \citep{liu2026sinktrack}.

In this study, we focus on massive activations. Although existing research has extensively demonstrated the phenomenon of massive activations, research on the mechanism underlying massive activations remains limited. We address this problem by asking: \textit{how does a token develop massive activations as it propagates a pretrained transformer?} We refer to tokens that exhibit massive activations as spike tokens \citep{sun2026spike}.

As LLMs are powered by billions of parameters, our analysis draws attention to the computation of each sub-module (attention and feed-forward networks) in transformers. 
Existing work \citep{zhang2026attention,sun2026spike} has demonstrated that massive activations emerge after certain feed-forward networks (FFNs) and are maintained in the middle transformer blocks through the residual layer (see Section \ref{sec:architecture} for more details). For example, in LLaMA-2-7B, massive activations of a spike token emerge after the FFN in the second transformer block \citep{sun2024massive,sun2026spike} and are maintained until reaching the last FFN. We refer to the FFN that produces massive activations as the spike FFN. 
The magnitudes of massive activations remain similar in the middle transformer blocks, indicating that only the residual layer maintains massive activations and that no further massive activations are triggered during the middle computation.

\citet{sun2026spike} assume that the SiLU activation function in the spike FFN behaves as a near-identity function for spike tokens. Under this assumption, each output channel $k$ of the spike FFN can be expressed as a quadratic form characterized by a channel-specific coefficient matrix $S_k$. The magnitude of output channel $k$ is determined by the dominant eigenvalue of $S_k$. 
Since the dominant eigenvalues associated with a few channels are substantially larger than those of other channels, massive activation emerges.

However, our empirical results show that the near-identity assumption on which \citet{sun2026spike} rely to explain massive activations is only valid for a small number of models. Among the six models examined in this study, only one model approximately satisfies the assumption (see Appendix \ref{appendix:violation} for more details). 
Moreover, their explanation overlooks the properties of input embeddings to the spike FFN. 
Instead, we find that the emergence of massive activations is controlled by a single channel in the input embeddings of a spike token to the spike FFN: massive activations appear once the value of this channel is sufficiently large (or small, depending on the LLM). 
For a given LLM, the position of this channel is fixed.
We refer to this channel as the massive activation gating channel (MAGC).
Unlike massive activations, whose magnitudes can reach approximately 400 in LLaMA-3.2-1B, the MAGC in the embedding of a spike token has an average value of only around 8.8. Although this value is substantially larger than those of the other channels in the embedding, its magnitude remains moderate because the embedding is layer-normalized before being fed into the FFN.


We verify this finding across multiple decoder-only LLM families, including LLaMA, Qwen, DeepSeek, and Mistral, with model sizes ranging from 1B to 32B parameters. 
Moreover, our theoretical analysis shows that massive activations can be attributed to the alignment between MAGC and certain rows and columns of the spike FFN weight matrices. 
Similarly, we empirically demonstrate that this alignment occurs in different LLMs. We summarize our contributions as follows. 

    
1. The emergence of massive activations is controlled by a single channel in the input embedding to a spike FFN, and the position of this channel is fixed for a particular LLM. We name this channel the massive activation gating channel (MAGC). 
    When the value of the MAGC is sufficiently large (or small, depending on the LLM), the output of the spike FFN exhibits massive activations. Examining six LLMs across four model families and different model sizes, we verify the existence and effect of MAGC.
    
    
2. We provide a theoretical explanation of the mechanism by which MAGC induces massive activations. When the value of MAGC is sufficiently large (or small), the output of a spike FFN asymptotically reduces to a quadratic form that mixes a few columns of the down-projection matrix of the FFN. Since these columns exhibit the shape of massive activations, the output therefore exhibits massive activations.


\section{Transformer Architecture} 
\label{sec:architecture}

A dense transformer model consists of $L$ transformer blocks, each of which can be further decomposed into modules. Taking LLaMA as an example \citep{touvron2023llama}, denote the input to the $l$-th transformer block as $X^l=\langle \boldsymbol{x}_0^l,\boldsymbol{x}_1^l,\cdots,\boldsymbol{x}_T^l \rangle$. Without further specification, $\cdot_t^l$ denotes an embedding of the $t$-th token in the $l$-th transformer block. Since our following analysis focuses on the single spike FFN, we omit the superscript $l$ for notational simplicity. Each block of LLaMA consists of an attention sub-module and an FFN sub-module, each of which uses pre-normalization (RMSNorm \citep{zhang2019root}) and includes a residual layer after its computation. 

The input embedding is first normalized by an RMSNorm, referred to as input layer norm. Formally, if $\boldsymbol{x}_t=\langle x_t(0),x_t(1),\cdots, x_t(d-1) \rangle$, where $d$ is the dimension of the embedding, then 
\begin{equation}
\label{eq:paln}
    \boldsymbol{n}_t = \text{RMSNorm}(\boldsymbol{x}_t): n_t(j) = \frac{x_t(j)\cdot w_j}{\text{RMS}(\boldsymbol{x}_t)} , \text{RMS}(\boldsymbol{x}_t)=\sqrt{\frac{\sum_j x_t(j)^2}{d} + \epsilon},
\end{equation}
where $w_j$ are trainable parameters and $\epsilon$ is a predefined small number. We denote $\cdot(j)$ as the $j$-th channel of the embedding. Next, the normalized embedding $\boldsymbol{n}_t$ is transformed by multi-head attention (MHA, or more specifically, grouped-query attention \citep{ainslie2023gqa}):
\begin{equation*}
    (\boldsymbol{u}_0, \boldsymbol{u}_1, \cdots, \boldsymbol{u}_T) = \text{MHA} (\boldsymbol{n}_0, \boldsymbol{n}_1, \cdots, \boldsymbol{n}_T),
\end{equation*}
where $\boldsymbol{u}_t$ is the embedding output of MHA. Usually, a transformer includes a residual layer after the multi-head attention \citep{vaswani2017attention}.
Next, the FFN sub-module follows a similar configuration with another RMSNorm of different parameters. 
\begin{equation*}
    \boldsymbol{h}_t = \text{RMSNorm} (\boldsymbol{x}_t + \boldsymbol{u}_t). 
\end{equation*}
A feed-forward network (FFN) is then applied to $\boldsymbol{h}_t$. In LLaMA, the FFN consists of a SwiGLU gated activation function \citep{shazeer2020glu} with three weight matrices: 
\begin{equation}
\label{eq:ffn}
    \boldsymbol{y}_t=\text{FFN} (\boldsymbol{h}_t), \boldsymbol{y}_t=\boldsymbol{W}_{\text{down}} \cdot \left( \text{SiLU}\left(\boldsymbol{W}_{\text{gate}}\boldsymbol{h}_t \right) \odot \left(\boldsymbol{W}_{\text{up}}\boldsymbol{h}_t \right) \right),
\end{equation}
where $\odot$ is the element-wise product and $\boldsymbol{y}$ is the output of FFN. $\boldsymbol{W}_{\text{gate}}, \boldsymbol{W}_{\text{up}}\in \mathbb{R}^{m\times d}$ and $\boldsymbol{W}_{\text{down}} \in \mathbb{R}^{d\times m}$ are pretrained weight matrices, where $m>d$ is the FFN intermediate dimension. 
Finally, another residual layer follows the FFN. That is, the output of a transformer block is $ \boldsymbol{x}_t + \boldsymbol{u}_t + \boldsymbol{y}_t$.

\begin{figure*}[htb]
    \centering
    \begin{subfigure}[b]{0.48\textwidth}
        \centering
        \includegraphics[width=\linewidth]{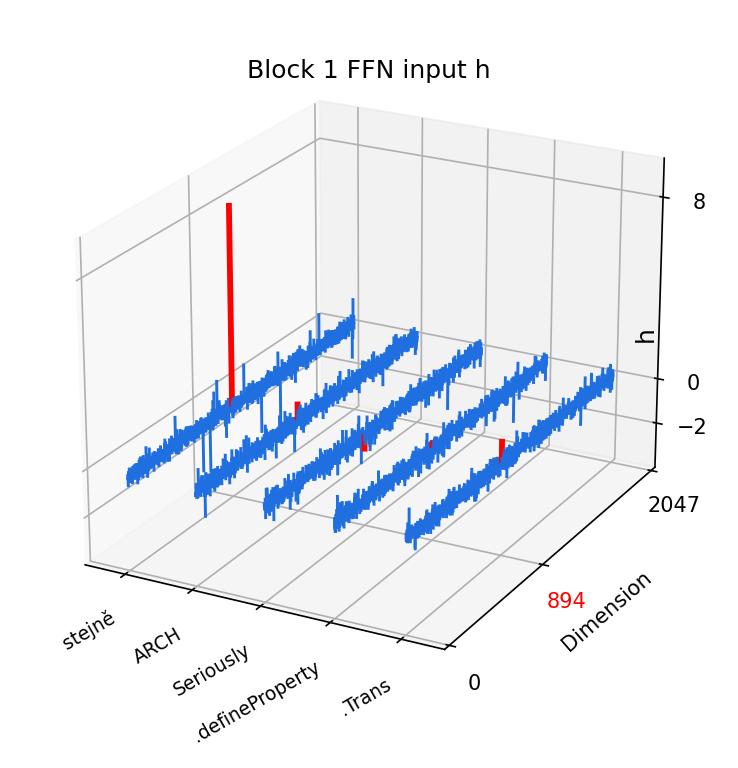}
        \caption{Value of input $\boldsymbol{h}$ to the spike FFN. We highlight the massive activation gating channel (894) in red.}
        \label{fig:llama_token0_input}
    \end{subfigure}
    \hfill
    \begin{subfigure}[b]{0.48\textwidth}
        \centering
        \includegraphics[width=\linewidth]{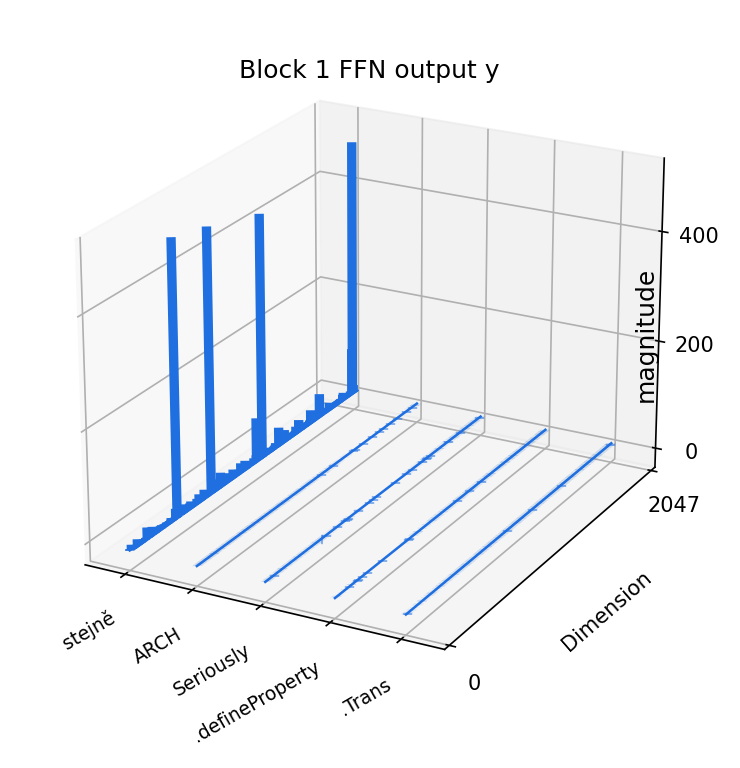}
        \caption{Magnitude of output $\boldsymbol{y}$ of the spike FFN. Token 0 exhibits massive activations. We depict magnitudes instead of actual values for better visualization.}
        \label{fig:llama_token0_output}
    \end{subfigure}

    \caption{Inputs and Outputs of the Spike FFN.}
    \label{fig:llama_token}
\end{figure*}

\section{Massive Activation Gating Channel}
\label{sec:3}

\textbf{Experimental Setup.} In this section, we utilize a small but advanced LLM, LLaMA-3.2-1B \citep{dubey2024llama}. We verify the findings discovered in this section on other model families with larger model sizes in Section~\ref{sec:llm_extension}. In LLaMA-3.2-1B, $d=2,048$ and $m=8,192$. 
We generate token sequences (without the BOS token) completely randomly to avoid the influence of semantics. Specifically, we randomly sample five tokens in the vocabulary set $\mathcal{V}$ and construct an input token sequence to the LLM. 
Note that the input tokens can be multilingual as well as completely meaningless. We conduct this sampling 5 times. 

\subsection{Existence of MAGC in the Input to Spike FFN}

Following \citet{sun2026spike}, we demonstrate the top-3 channel magnitudes of the transformer block outputs $\boldsymbol{x}+\boldsymbol{u}+\boldsymbol{y}$ in Figure \ref{fig:llama1_top3} in Appendix \ref{appendix:exp_llama1}. Similarly, we show that massive activations are first triggered by the FFN in an early layer, and are then preserved and propagated along the residual stream across layers. As depicted, we identify that in LLaMA-3.2-1B, the FFN in the second transformer block (i.e., block 1) produces massive activations. We name this FFN the spike FFN and focus our analysis on it.

A straightforward method for analyzing the causes of massive activations is to trace their inputs. To this end, we directly visualize the embeddings of the inputs and outputs of the spike FFN for different tokens in Figure \ref{fig:llama_token}, using a token sequence example. We demonstrate other examples in Figure \ref{fig:llama_token_other} in Appendix \ref{appendix:other_sample}. 
As shown in Figure \ref{fig:llama_token0_output}, the first token (``stejně'') exhibits massive activations, whereas other tokens do not. Comparing their inputs to the FFN, we find that the input $h$ for the first token has a single outlier value, as shown in Figure \ref{fig:llama_token0_input}.
Specifically, the 894th channel of the first token has a value of 8.807, while the absolute values of the other channels are all less than 2. The 894th channel of the other tokens has values of 0.814, -0.763, 0.335, and 1.029, respectively.
In the next subsection, we will show that this single channel controls the emergence of massive activations, and we name this channel the massive activation gating channel (MAGC).



\begin{figure*}[htb]
    \centering
    \hfill
    \begin{subfigure}[b]{0.48\textwidth}
        \centering
        \includegraphics[width=\linewidth]{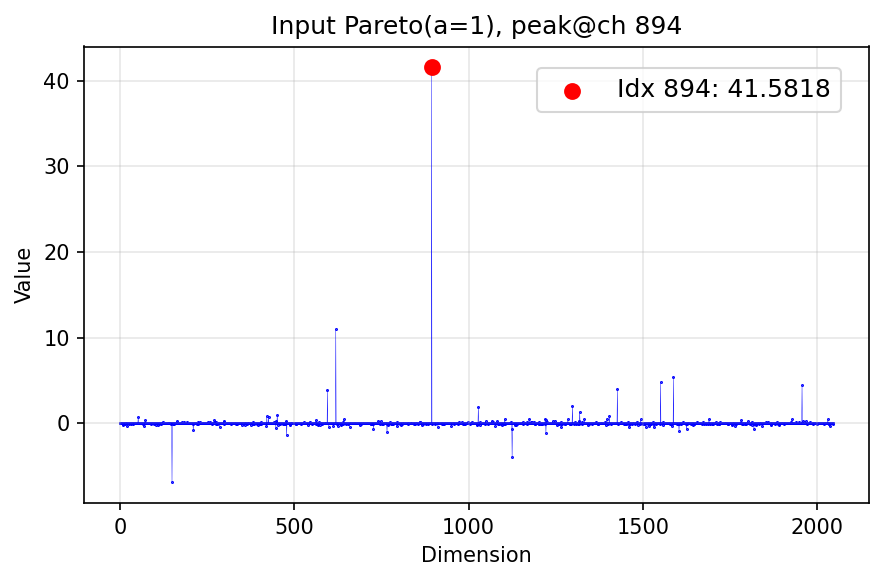}
    \end{subfigure}
    \hfill
    \begin{subfigure}[b]{0.48\textwidth}
        \centering
        \includegraphics[width=\linewidth]{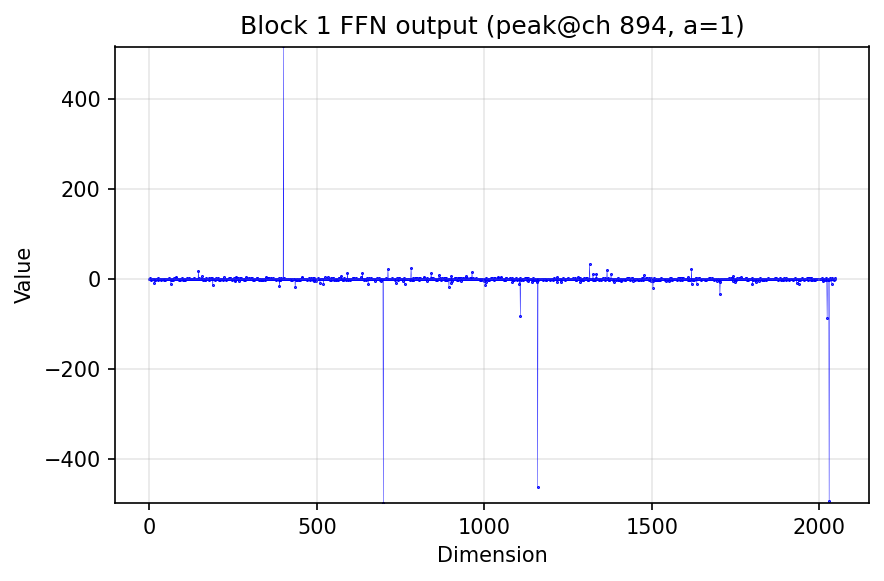}
    \end{subfigure}
    
    \hfill
    \begin{subfigure}[b]{0.48\textwidth}
        \centering
        \includegraphics[width=\linewidth]{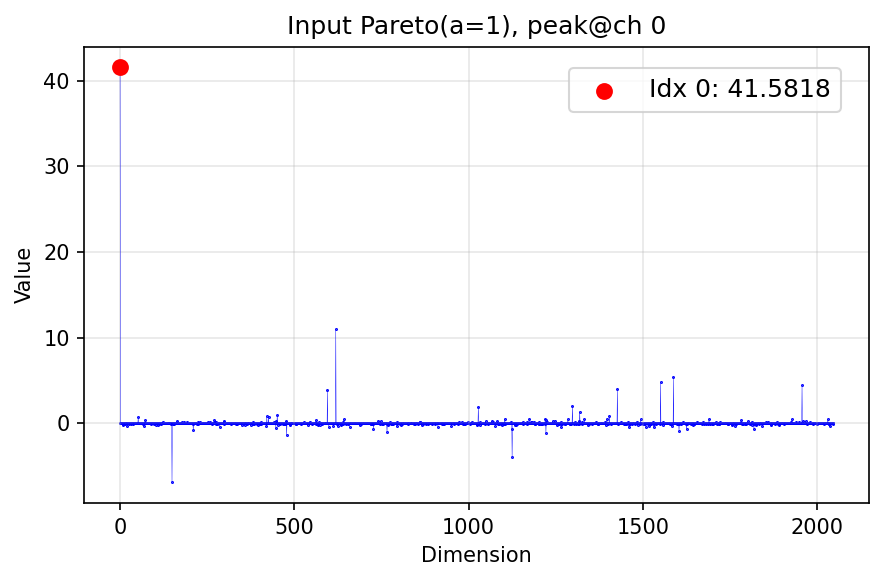}
    \end{subfigure}
    \hfill
    \begin{subfigure}[b]{0.48\textwidth}
        \centering
        \includegraphics[width=\linewidth]{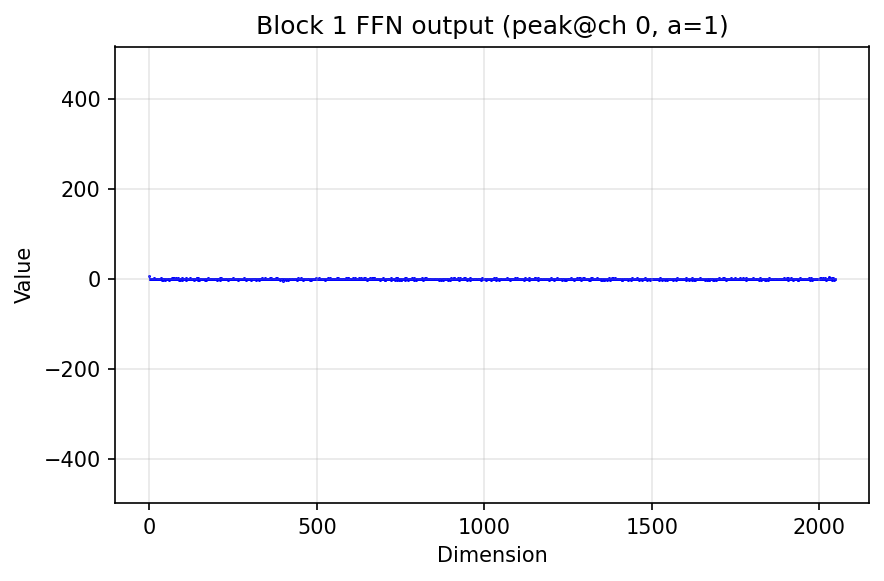}
    \end{subfigure}

    \hfill
    \begin{subfigure}[b]{0.48\textwidth}
        \centering
        \includegraphics[width=\linewidth]{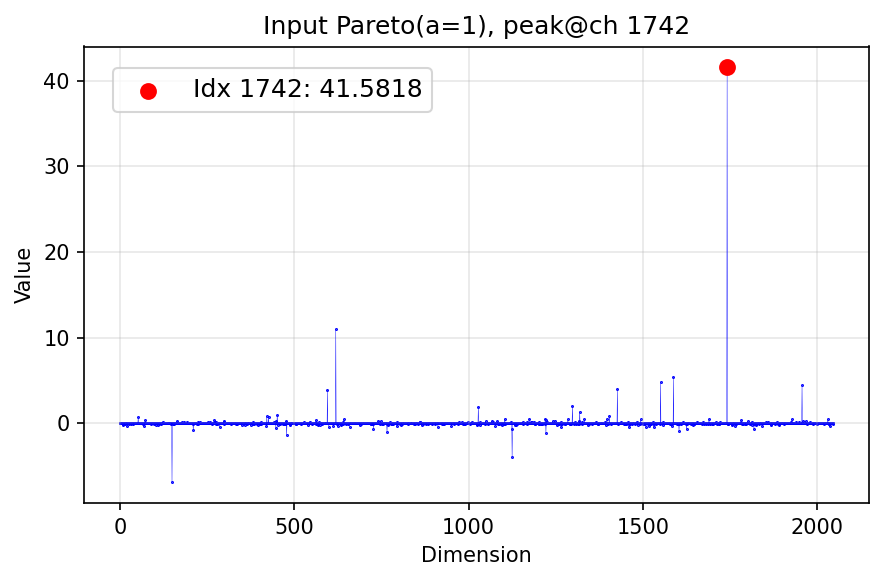}
    \end{subfigure}
    \hfill
    \begin{subfigure}[b]{0.48\textwidth}
        \centering
        \includegraphics[width=\linewidth]{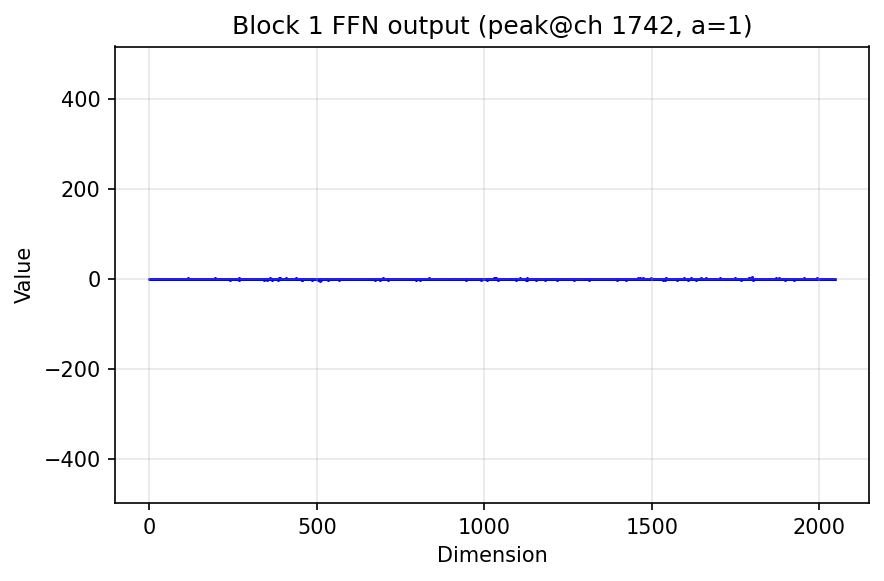}
    \end{subfigure}

    \caption{From top to bottom, the figure shows the output of the spike FFN when the input vector values are randomly sampled from a Pareto distribution ($a=1$), with the input value of the $894$th (MAGC), $0$th, or $1742$nd channel swapped with the maximum input value, respectively.}
    \label{fig:layer1FFN}
\end{figure*}

\subsection{MAGC Triggers Massive Activations}

Next, we show that this single-channel spike triggers massive activations through the computation of the spike FFN. To empirically validate this claim, we conduct the following experiment. Specifically, we randomly generate the input embedding to the spike FFN from a Pareto distribution with random signs: each channel value of the input embedding is i.i.d. drawn as $X = \text{random}(\pm) Z$ with $Z\sim\text{Pareto}(a), Z\geq 0$, where each element takes $+Z$ and $-Z$ with equal probability 0.5, and $a$ is a fixed shape parameter. Next, we swap the input value of the 894th channel with the maximum input value. We then input this randomly generated embedding into the spike FFN.
Meanwhile, we conduct a controlled experiment. Specifically, we used the same randomly sampled input embedding but swapped the input value of a randomly selected channel other than the 894th, such as the 0th or 1742nd, with the maximum value.
We report the results for $a=1$ in this section and present the results for $a=3$ and $a=5$ in Appendix \ref{appendix:a3a5}.

Figure \ref{fig:layer1FFN} depicts the randomly generated inputs and their corresponding outputs after the spike FFN. It can be observed that when swapping the value of the 894th channel with the maximum value, the outputs of the spike FFN demonstrate massive activations. Since the randomly generated inputs carry no semantic meaning, this phenomenon mainly indicates that the FFN learns to recognize MAGC.
However, swapping the value of any other channel, such the 0th or 1742nd channel, with the maximum value does not lead to the output of massive activations. 
We therefore conclude that MAGC controls the emergence of massive activations.

Moreover, as shown in Figure \ref{fig:layer1FFN_a3} and \ref{fig:layer1FFN_a5} in Appendix \ref{appendix:a3a5}, decreasing $a$ increases the maximum value generated by the Pareto distribution. 
More importantly, assigning a larger maximum value to the 894th channel of the spike FFN input results in a correspondingly larger activation magnitude in its output. Specifically, as $a$ decreases from 5 to 1, the largest output activation magnitude increases from approximately 40 to 400. These results demonstrate that the spike FFN is highly sensitive to the value of the 894th input channel, with a larger value in this channel inducing more ``massive'' activations.

\begin{keytakeaway}[Key Takeaway 1:]
~There exists a massive activation gating channel (MAGC) in the hidden representation of a spike token, whose position is fixed for a specific LLM. When the hidden representation is used as the input to the spike FFN, the FFN output exhibits massive activations when the value of the MAGC in the hidden representation is sufficiently large (or small, depending on the LLM).
\end{keytakeaway}

\section{From MAGC to Massive Activations}

In this section, we examine how a large value of MAGC induces massive activations in LLMs. Since a large value of MAGC corresponds to a large magnitude in a certain channel of the input to the FFN, we begin by analyzing the asymptotic property of the FFN with respect to any input channel. Recall the FFN computation in Equation \ref{eq:ffn}: $\boldsymbol{y}(\boldsymbol{h})=\boldsymbol{W}_{\mathrm{down}}
    \left(
    \mathrm{SiLU}(\boldsymbol{W}_{\mathrm{gate}}\boldsymbol{h})
    \odot
    \boldsymbol{W}_{\mathrm{up}}\boldsymbol{h}
    \right).$
We show that, when a fixed channel has a large magnitude (in either direction), the mechanism of the FFN can be simplified as mixing columns of $\boldsymbol{W}_{\text{down}}$. Note that the mixing mechanism depends on the direction of the channel value ($x\to +\infty$ or $x\to -\infty$), which explains why MAGC also has a fixed gating direction.

\subsection{Asymptotic Property of an FFN w.r.t. an Input Channel}

Figure \ref{fig:llama_token0_input} demonstrates the phenomenon that the input to the FFN contains a large magnitude in a certain channel for spike tokens. We begin by analyzing the asymptotic property of any FFN with respect to any input channel. Consider a gated FFN in Equation \ref{eq:ffn}, for input $\boldsymbol{h}\in\mathbb{R}^d$,
\begin{align}
    \text{FFN}(\boldsymbol{h}) = \boldsymbol{y}(\boldsymbol{h})
    =
    \boldsymbol{W}_{\mathrm{down}}
    \left(
    \mathrm{SiLU}(\boldsymbol{W}_{\mathrm{gate}}\boldsymbol{h})
    \odot
    \boldsymbol{W}_{\mathrm{up}}\boldsymbol{h}
    \right).
\end{align}
where $\boldsymbol{W}_{\mathrm{up}} \in \mathbb{R}^{m\times d}$, $\boldsymbol{W}_{\mathrm{gate}} \in \mathbb{R}^{m\times d}$, and $\boldsymbol{W}_{\mathrm{down}} \in \mathbb{R}^{d\times m}$ are parameter matrices, $\mathrm{SiLU}(x):=x \cdot \sigma(x)$ with the logistic sigmoid function $\sigma(x):=\frac{1}{1+\exp(-x)}$, and $\odot$ is the element-wise product.

We focus on the input $\boldsymbol{h}$ controlled by a fixed single channel $i$. For any value $x\in \mathbb{R}$ in the fixed channel $i$, the input can be written as,
\begin{align}
    \boldsymbol{h}(x)=x \cdot \boldsymbol{e}_{i}+ \boldsymbol{\bar h}. \label{equation:hx}
\end{align}
Here, \(\boldsymbol{\bar h} \in\mathbb{R}^{d}\) is the representation outside the focused channel \(i\) with $\boldsymbol{\bar h}(i)=0$, and $\boldsymbol{e}_{i}$ is the unit vector at position $i$. Next, we analyze the asymptotic property of $\boldsymbol{y}(\boldsymbol{h}(x))$ for $x\to +\infty$ without loss of generality.
Let $\boldsymbol{\omega}_j={\boldsymbol{W}_{\mathrm{down}}^{(:,j)}} \big/ {\left\|\boldsymbol{W}_{\mathrm{down}}^{(:,j)}\right\|_2}$ be the normalized $j$-th column of $\boldsymbol{W}_{\mathrm{down}}$.
The following theorem shows the closed form of $\boldsymbol{y}(\boldsymbol{h}(x))$ when $x\to +\infty$.

\begin{theorem}
\label{theorem:cs1}
Given input $\boldsymbol{h}(x)=x \cdot \boldsymbol{e}_{i}+ \boldsymbol{\bar h}$, when $x\to +\infty$, the output of an FFN has a quadratic form $\boldsymbol{y}(\boldsymbol{h}(x)) = x^2 \cdot \boldsymbol{y}^*+o(x^2)$ with constant $\boldsymbol{y}^*\in \mathbb{R}^d$ equal to
\begin{align}
    \boldsymbol{y}^* = \sum_{j=1}^m \alpha_j\cdot \boldsymbol{\omega}_j,
    \label{eq:y_star}
\end{align}
where 
$\boldsymbol{\alpha} = (\alpha_1,\dots,\alpha_m)$ is the combination of coefficients in the weight matrices:
\begin{align}
    \alpha_j = \begin{cases}
        W_{\mathrm{gate}}^{(j,i)} W_{\mathrm{up}}^{(j,i)} \left\| \boldsymbol{W}_{\mathrm{down}}^{(:,j)} \right\|_2 & W_{\mathrm{gate}}^{(j,i)} > 0 \\
        0 & W_{\mathrm{gate}}^{(j,i)} \leq  0
    \end{cases}.
\end{align}
Proof. See Appendix \ref{appendix:proof1}.
\end{theorem}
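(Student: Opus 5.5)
The plan is to decompose the FFN output coordinatewise over the $m$ intermediate neurons and take the limit of each neuron separately. Write $g_j(x) := (\boldsymbol{W}_{\mathrm{gate}}\boldsymbol{h}(x))_j = a_j x + b_j$ with $a_j = W_{\mathrm{gate}}^{(j,i)}$ and $b_j = (\boldsymbol{W}_{\mathrm{gate}}\boldsymbol{\bar h})_j$. Similarly write $u_j(x) := (\boldsymbol{W}_{\mathrm{up}}\boldsymbol{h}(x))_j = c_j x + d_j$ with $c_j = W_{\mathrm{up}}^{(j,i)}$ and $d_j = (\boldsymbol{W}_{\mathrm{up}}\boldsymbol{\bar h})_j$. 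Both follow from linearity and Equation~\ref{equation:hx}. Then
\begin{align*}
\boldsymbol{y}(\boldsymbol{h}(x)) = \sum_{j=1}^m \mathrm{SiLU}(g_j(x))\, u_j(x)\, \boldsymbol{W}_{\mathrm{down}}^{(:,j)} = \sum_{j=1}^m \mathrm{SiLU}(g_j(x))\, u_j(x)\, \bigl\|\boldsymbol{W}_{\mathrm{down}}^{(:,j)}\bigr\|_2\, \boldsymbol{\omega}_j .
\end{align*}
Since $m$ is finite, it suffices to show that for each $j$ the scalar $\phi_j(x) := \mathrm{SiLU}(g_j(x))u_j(x)$ satisfies $\phi_j(x)/x^2 \to a_j c_j$ if $a_j>0$, and $\phi_j(x)/x^2 \to 0$ otherwise. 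Multiplying by $\|\boldsymbol{W}_{\mathrm{down}}^{(:,j)}\|_2$ and summing then gives $\boldsymbol{y} = x^2\boldsymbol{y}^* + o(x^2)$ with the stated $\alpha_j$. (If a column of $\boldsymbol{W}_{\mathrm{down}}$ is zero, $\boldsymbol{\omega}_j$ is undefined, but that term contributes nothing, so we can drop it or set $\boldsymbol{\omega}_j=0$ by convention.)

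The case analysis on the sign of $a_j$ goes as follows.

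\textbf{Case $a_j>0$.} Here $g_j(x)\to+\infty$, so $\sigma(g_j(x))\to 1$. Hence $\mathrm{SiLU}(g_j(x))/x = (a_j + b_j/x)\,\sigma(g_j(x)) \to a_j$. Since $u_j(x)/x \to c_j$, the product gives $\phi_j/x^2\to a_jc_j$.

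\textbf{Case $a_j=0$.} Here $g_j\equiv b_j$ is constant, so $\phi_j(x) = \mathrm{SiLU}(b_j)(c_jx+d_j) = O(x) = o(x^2)$.

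\textbf{Case $a_j<0$.} Here $g_j(x)\to-\infty$ linearly. Then $|\mathrm{SiLU}(g_j)| = |g_j|\,\sigma(g_j) \le |g_j| e^{g_j}$, which decays exponentially in $x$. This dominates the linear growth of $u_j$, so $\phi_j(x)\to 0$, and in particular $\phi_j = o(x^2)$.

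The only step needing care is this last case. We must make sure the exponential decay $\sigma(g_j)\le e^{g_j} = e^{a_jx+b_j}$ beats the polynomial factor $|g_j|\,|u_j| = O(x^2)$. This holds because $x^2 e^{a_j x}\to 0$ for $a_j<0$.

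Collecting the three cases shows that every neuron with $W_{\mathrm{gate}}^{(j,i)}\le 0$ contributes $o(x^2)$, and every neuron with $W_{\mathrm{gate}}^{(j,i)}>0$ contributes $W_{\mathrm{gate}}^{(j,i)}W_{\mathrm{up}}^{(j,i)}\|\boldsymbol{W}_{\mathrm{down}}^{(:,j)}\|_2\,\boldsymbol{\omega}_j\,x^2 + o(x^2)$. Summing these finitely many terms yields $\boldsymbol{y}^*=\sum_j\alpha_j\boldsymbol{\omega}_j$, which is Equation~\ref{eq:y_star}.

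The analogous statement for $x\to-\infty$ follows by the same argument with the role of the sign of $W_{\mathrm{gate}}^{(j,i)}$ reversed. This is the source of the fixed gating direction mentioned in the text.
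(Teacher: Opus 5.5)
Your proof is correct and follows the same route as the paper's. You decompose $\boldsymbol{W}_{\mathrm{gate}}\boldsymbol{h}(x)$ and $\boldsymbol{W}_{\mathrm{up}}\boldsymbol{h}(x)$ as $x$ times the $i$-th column plus a constant, take neuronwise limits of $\mathrm{SiLU}(\cdot)/x$ and $(\cdot)/x$, and push the product through $\boldsymbol{W}_{\mathrm{down}}$; your separate treatment of $W_{\mathrm{gate}}^{(j,i)}=0$, of $W_{\mathrm{gate}}^{(j,i)}<0$ (via $\sigma(z)\le e^{z}$), and of zero columns of $\boldsymbol{W}_{\mathrm{down}}$ just supplies details the paper asserts without justification.
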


Specifically, for any input channel $i$ in $\boldsymbol{h}$, $\mathrm{SiLU}(\boldsymbol{W}_{\mathrm{gate}}\boldsymbol{h})$ acts as a gating function, choosing column $j$ only when $W_{\mathrm{gate}}^{(j,i)} x >0$, and $\alpha_j$ controls the sensitivity to each $\boldsymbol{\omega}_j$. From Theorem~\ref{theorem:cs1}, when \(x\) is large enough, the output $\boldsymbol{y}(\boldsymbol{h}(x)) = x^2 \cdot \boldsymbol{y}^*+o(x^2)$ exhibits a pattern of massive activations only if $\boldsymbol{y}^*$ has the same pattern. 
Similarly, an analogous result holds as $x \to -\infty$. 


\subsection{From MAGC to Massive Activations}

We introduce several metrics below to quantify the degree of massive activations.
Since massive activations describe the phenomenon in which certain activations exhibit large magnitudes, we utilize the energy distribution to characterize their magnitude distribution.
\begin{definition}[Energy Distribution] \label{def:energy}
    Given vector $\boldsymbol{v}=(v_1,\dots,v_d)\in \mathbb{R}^d$, let its energy distribution be $\boldsymbol{p}(\boldsymbol{v})=(p_1(\boldsymbol{v}),\dots,p_d(\boldsymbol{v}))$, where
    \begin{align}
        p_i(\boldsymbol{v})=\frac{v^2_i}{\|\boldsymbol{v}\|_2^2},\quad \forall i=1,\dots,d.
    \end{align}
\end{definition}
The energy distribution is a probability distribution. Moreover, if $\boldsymbol{y}$ exhibits massive activations, $\boldsymbol{p}(\boldsymbol{y})$ should be highly concentrated. We define the degree of concentration as the negative entropy of $\boldsymbol{p}(\boldsymbol{y})$ plus a constant. Formally, 
\begin{definition}[Concentration Degree] \label{def:concentrate}
    Given vector $\boldsymbol{v}\in \mathbb{R}^d$, let its concentration degree be
    \begin{align}
        R(\boldsymbol{v})=\log d + \sum_{i=1}^d p_i(\boldsymbol{v})\log p_i(\boldsymbol{v})
    \end{align}
    where $\boldsymbol{p}(\boldsymbol{v})=(p_1(\boldsymbol{v}),\dots,p_d(\boldsymbol{v}))$ is the energy distribution defined in Definition~\ref{def:energy}.
\end{definition}
The concentration degree $R(\boldsymbol{v})\in [0,\log d]$ is exactly the negative entropy plus a constant $\log d$. A larger $R(\boldsymbol{v})$ means $\boldsymbol{v}$ is more concentrated. 
$R(\boldsymbol{v})=0$ means $\boldsymbol{v}$ is uniform, while $R(\boldsymbol{v})=\log d$ means $\boldsymbol{v}$ has only one non-zero entry.

We now fix MAGC as the focused channel $i^*$ and analyze the relationship between the concentration degree of the output $\boldsymbol{y}^*$ and those of $\boldsymbol{\omega}_j$ and $\boldsymbol{\alpha}$. We first find that the following assumption is empirically satisfied.

\begin{assumption}
 When MAGC is the focused channel, $\boldsymbol{\alpha}$ has a high concentration degree, i.e., $R(\boldsymbol{\alpha}) \geq \log m - \epsilon$ for a small $\epsilon$. Let $j^*=\arg\max_j p_j(\boldsymbol{\alpha})$ be the dominant column index; then
 \begin{align}
    R(\boldsymbol{\alpha}) \geq \log m - \epsilon \Rightarrow p_{j^*}(\boldsymbol{\alpha}) \geq e^{-\epsilon} .
\end{align}
\end{assumption}

\textbf{Empirical verification.} Rather than assuming this property, we verify it directly by scanning all $d=2{,}048$ input channels of the spike FFN in LLaMA-3.2-1B: for each channel $i$ we form the coefficient vector $\boldsymbol{\alpha}$ from Theorem~\ref{theorem:cs1} and measure its concentration $R(\boldsymbol{\alpha})$, dominant energy share $p_{j^*}(\boldsymbol{\alpha})$, and norm $\|\boldsymbol{\alpha}\|_2$. MAGC ($i^*=894$) attains the largest concentration among all channels, with $R(\boldsymbol{\alpha})=9.009$ against $\log m=9.011$ (i.e.\ $\epsilon\approx 1.9\times 10^{-3}$) and $p_{j^*}(\boldsymbol{\alpha})=0.9998$, whereas the mean over all channels is only $R(\boldsymbol{\alpha})=2.908$.
This confirms that MAGC is a significant outlier in $\boldsymbol{\alpha}$-concentration. 

\begin{figure*}[htbp]
    \centering

    \begin{subfigure}[b]{0.48\textwidth}
        \centering
        \includegraphics[width=\linewidth]{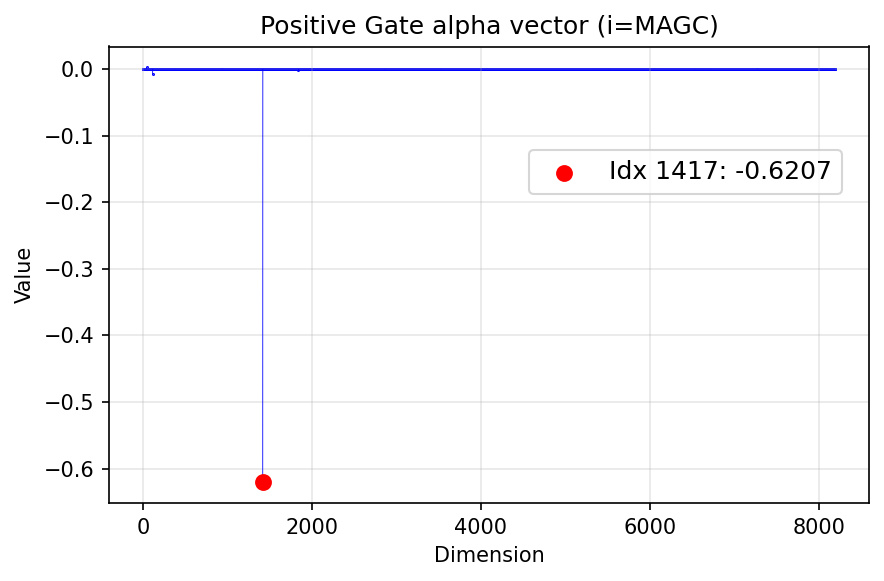}
        \caption{$\boldsymbol{\alpha}$ vector when $i^*=894$ for LLaMA-3.2-1B.}
        \label{fig:llama_alpha}
    \end{subfigure}
    \hfill
    \begin{subfigure}[b]{0.48\textwidth}
        \centering
        \includegraphics[width=\linewidth]{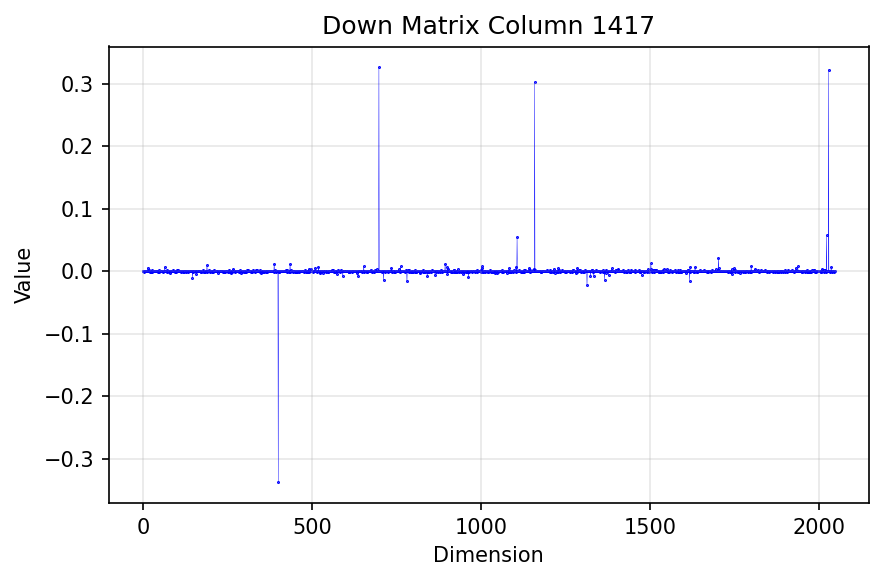}
        \caption{$\boldsymbol{\omega}_{1417}$ for LLaMA-3.2-1B.}
        \label{fig:llama_down_column}
    \end{subfigure}

    \hfill
    
    \begin{subfigure}[b]{0.48\textwidth}
        \centering
        \includegraphics[width=\linewidth]{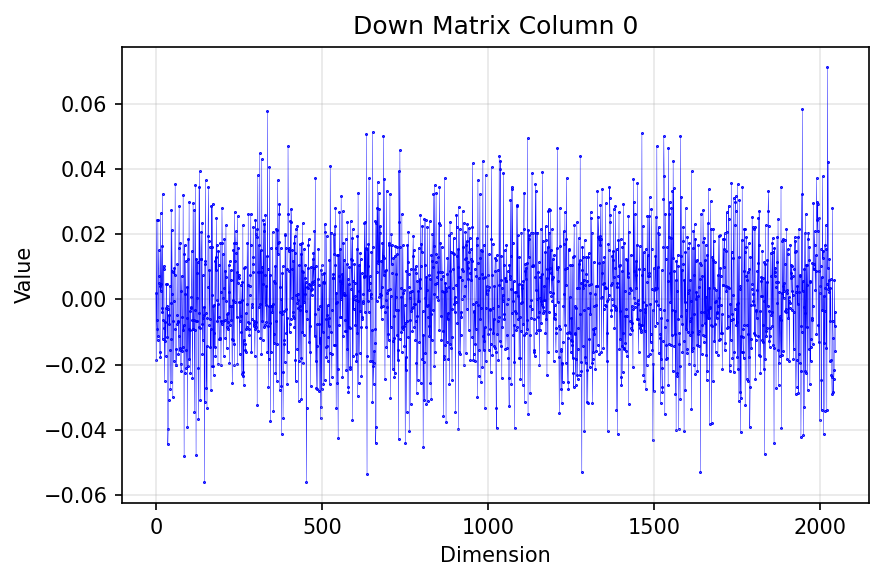}        \caption{$\boldsymbol{\omega}_{0}$ for LLaMA-3.2-1B.}
        \label{fig:llama_down_column_0}
    \end{subfigure}
    \hfill
    \begin{subfigure}[b]{0.48\textwidth}
        \centering
        \includegraphics[width=\linewidth]{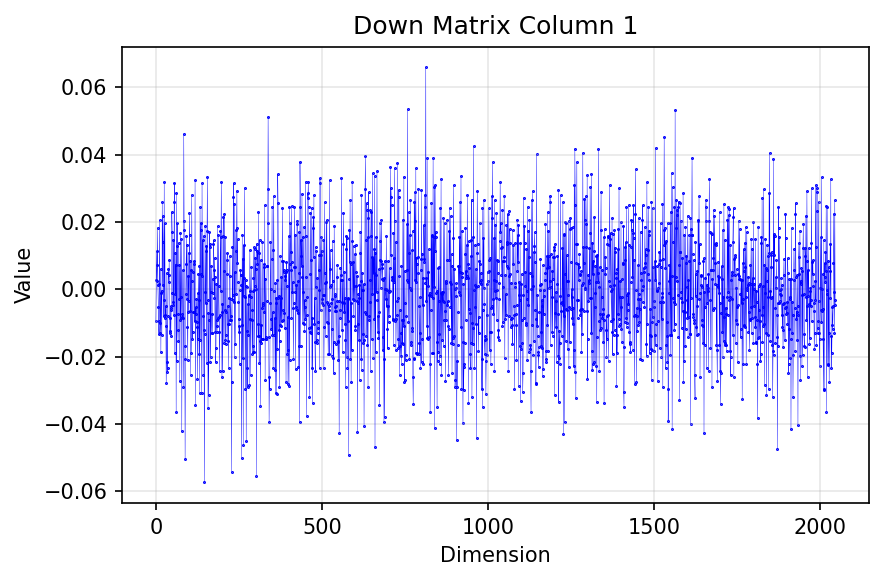}
        \caption{$\boldsymbol{\omega}_{1}$ for LLaMA-3.2-1B.}
        \label{fig:llama_down_column_1}
    \end{subfigure}

    \caption{Visualization results for LLaMA-3.2-1B.}
    \label{fig:llama_combined}
\end{figure*}

A high concentration degree $R(\boldsymbol{\alpha})$ indicates that a few $p_j(\boldsymbol{\alpha})$ are extremely large. We empirically demonstrate $\boldsymbol{\alpha}$ when $i^*=894$ for LLaMA-3.2-1B in Figure \ref{fig:llama_alpha} and highlight the largest magnitude in the vector. As shown, $\boldsymbol{\alpha}$ is highly concentrated with $j^*=1417$. That is, the output $\boldsymbol{y}$ of the FFN is largely controlled by the $j^*$-th column of $\boldsymbol{W}_{\mathrm{down}}$.
Theoretically, based on the high concentration of $\boldsymbol{\alpha}$, we next quantify the contribution of the remaining columns.
\begin{definition}
\label{def:kappa}
 Given $j^*=\arg\max_j p_j(\boldsymbol{\alpha})$, define the residual amplification factor
\[ \kappa=\frac{\left\|\sum_{j\neq j^*}\alpha_j \boldsymbol{\omega}_j\right\|_2}{\sqrt{\sum_{j\neq j^*}\alpha_j^2 }}.
\]
\end{definition}
The residual amplification factor \(\kappa\) quantifies the contribution of other channels after removing the dominant column $j^*$. 
In the following proposition, \(\kappa\) controls how strongly the non-dominant columns can perturb the energy distribution of the output. 

\begin{proposition} \label{prop:other_channel}
Let 
\begin{align}
   \tau=\kappa \sqrt{\frac{1-p_{j^*}(\boldsymbol{\alpha})}{p_{j^*}(\boldsymbol{\alpha})}}.
\end{align}
We have
\begin{align}
    \left|R(\boldsymbol{y}^*)-R(\boldsymbol{\omega}_{j^*})\right|
\leq
\Delta\left(|\sin\angle(\boldsymbol{y}^*,\boldsymbol{\omega}_{j^*})| \right)\leq
\Delta\left(
\tau
\right),
\end{align}
where for $t \in\left(0,1-\frac{1}{d}\right)$, the function $\Delta$ is defined as
$
\Delta(t)
=
-t\log t-(1-t)\log(1-t)+t\log(d-1).
$

Proof. See Appendix~\ref{appendix:proof2}. 
\end{proposition}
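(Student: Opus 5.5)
The plan is to pass through the unit vector $\boldsymbol{u}=\boldsymbol{y}^*/\|\boldsymbol{y}^*\|_2$ and compare it with $\boldsymbol{w}=\boldsymbol{\omega}_{j^*}$. Energy distributions are invariant under scaling and under the sign flip $\boldsymbol{v}\mapsto-\boldsymbol{v}$, so $\boldsymbol{p}(\boldsymbol{y}^*)=\boldsymbol{p}(\boldsymbol{u})$. We may also choose the sign of $\boldsymbol{u}$ so that $\boldsymbol{u}^\top\boldsymbol{w}\ge 0$. Since $R(\boldsymbol{v})=\log d-H(\boldsymbol{p}(\boldsymbol{v}))$ with $H$ the Shannon entropy, we have $|R(\boldsymbol{y}^*)-R(\boldsymbol{\omega}_{j^*})|=|H(\boldsymbol{p}(\boldsymbol{u}))-H(\boldsymbol{p}(\boldsymbol{w}))|$. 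The proof therefore splits into three steps: a total-variation bound on energy distributions, an entropy continuity bound, and a geometric bound on the angle.

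\textbf{Step 1 (total variation via the angle).} Write $\theta=\angle(\boldsymbol{u},\boldsymbol{w})\in[0,\pi/2]$. Cauchy--Schwarz gives
$\sum_i |u_i^2-w_i^2|=\sum_i|u_i-w_i|\,|u_i+w_i|\le \|\boldsymbol{u}-\boldsymbol{w}\|_2\|\boldsymbol{u}+\boldsymbol{w}\|_2 = 2\sin(\theta/2)\cdot 2\cos(\theta/2)=2\sin\theta$.
Hence the total variation distance satisfies $T:=\tfrac12\|\boldsymbol{p}(\boldsymbol{u})-\boldsymbol{p}(\boldsymbol{w})\|_1\le|\sin\theta|$.

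\textbf{Step 2 (entropy continuity).} Apply the classical (commutative) Fannes--Audenaert inequality: for distributions on $d$ points with total variation $T\le 1-\frac1d$,
\[
|H(\boldsymbol{p})-H(\boldsymbol{q})|\le T\log(d-1)-T\log T-(1-T)\log(1-T)=\Delta(T).
\]
To keep the argument self-contained, I would prove it by a maximal coupling $(X,Y)$ with $\Pr[X\neq Y]=T$. Then $|H(X)-H(Y)|\le\max\{H(X|Y),H(Y|X)\}$, and Fano's inequality bounds each conditional entropy by $h(T)+T\log(d-1)$. Next, $\Delta'(t)=\log\frac{(1-t)(d-1)}{t}>0$ on $(0,1-\frac1d)$, so $\Delta$ is increasing there. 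Together with $T\le|\sin\theta|$ this gives the first inequality of the proposition. This requires the assumption, implicit in the statement's domain of $\Delta$, that $|\sin\theta|$ lies in that interval. I expect this step to be the main obstacle: the Fano/coupling argument has to be stated carefully enough to produce exactly the constant $\log(d-1)$, rather than a looser Fannes constant.

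\textbf{Step 3 (angle bound by $\tau$).} By Theorem~\ref{theorem:cs1}, $\boldsymbol{y}^*=\alpha_{j^*}\boldsymbol{w}+\boldsymbol{r}$ with $\boldsymbol{r}=\sum_{j\ne j^*}\alpha_j\boldsymbol{\omega}_j$. By Definition~\ref{def:kappa}, $\|\boldsymbol{r}\|_2=\kappa\sqrt{\sum_{j\ne j^*}\alpha_j^2}$. Therefore
\[
\frac{\|\boldsymbol{r}\|_2}{|\alpha_{j^*}|}=\kappa\sqrt{\frac{\|\boldsymbol{\alpha}\|_2^2-\alpha_{j^*}^2}{\alpha_{j^*}^2}}=\kappa\sqrt{\frac{1-p_{j^*}(\boldsymbol{\alpha})}{p_{j^*}(\boldsymbol{\alpha})}}=\tau .
\]
Suppose $\tau<1$ (otherwise the bound is vacuous). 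Then $\boldsymbol{y}^*$ lies in the ball of radius $\|\boldsymbol{r}\|_2$ centered at $\alpha_{j^*}\boldsymbol{w}$, and this ball excludes the origin. An elementary tangent-line argument shows that the sine of the angle between any point $\boldsymbol{v}$ of this ball and the line $\mathrm{span}(\boldsymbol{w})$ is at most $\|\boldsymbol{r}\|_2/|\alpha_{j^*}|$. Concretely, with $\boldsymbol{v}=a\boldsymbol{w}+\boldsymbol{r}$, minimize $\|\boldsymbol{v}\|_2^2$ subject to a fixed orthogonal component. Hence $|\sin\angle(\boldsymbol{y}^*,\boldsymbol{\omega}_{j^*})|\le\tau$. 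Monotonicity of $\Delta$ from Step 2 then gives $\Delta(|\sin\angle|)\le\Delta(\tau)$, provided $\tau<1-\frac1d$, which completes the chain of inequalities.
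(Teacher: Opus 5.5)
Your proposal is correct and follows the paper's proof essentially step for step. Both arguments use the same four ingredients:
\begin{itemize}
\item Cauchy--Schwarz to bound the total variation of the energy distributions by $|\sin\angle(\boldsymbol{y}^*,\boldsymbol{\omega}_{j^*})|$;
\item the Fannes--Audenaert inequality;
\item monotonicity of $\Delta$ on $(0,1-\frac1d)$;
\item the bound $|\sin\angle(\boldsymbol{y}^*,\boldsymbol{\omega}_{j^*})|\le\|\boldsymbol r\|_2/|\alpha_{j^*}|=\tau$, obtained by splitting $\boldsymbol r$ along and orthogonal to $\boldsymbol{\omega}_{j^*}$.
\end{itemize}

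The differences are minor. You derive Fannes--Audenaert from a maximal coupling and Fano's inequality, which gives exactly the $\log(d-1)$ constant, whereas the paper simply cites it. Your tangent-cone argument needs $\tau<1$. The paper's algebraic identity $\bigl(r_\parallel(\alpha_{j^*}+r_\parallel)+\|\boldsymbol r_\perp\|_2^2\bigr)^2\ge 0$ gives the angle bound without that assumption. This costs you nothing, since $\Delta(\tau)$ is only defined for $\tau<1-\frac1d$ anyway.
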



Proposition \ref{prop:other_channel} indicates that when $p_{j^*}(\boldsymbol{\alpha})$ is large (i.e., close to 1), we have $\tau\approx 0$, and there is almost no gap between the concentration degree of $\boldsymbol{y}^*$ and that of $\boldsymbol{\omega}_{j^*}$.

In the case of LLaMA-3.2-1B, the output $\boldsymbol{y}$ of the FFN is largely controlled by the $j^*$-th (1417th) column of $\boldsymbol{W}_{\mathrm{down}}$. We next depict $\boldsymbol{\omega}_{j^*}$ in Figure \ref{fig:llama_down_column}. Comparing Figure \ref{fig:llama_down_column} to \ref{fig:llama_token0_output}, we find that these two vectors are highly co-linear, empirically with $|\sin\angle(\boldsymbol{y}^*,\boldsymbol{\omega}_{j^*})| = 0.0055$. For other columns such as columns 0 and 1 in Figures \ref{fig:llama_down_column_0} and \ref{fig:llama_down_column_1}, the parameters have lower maximum magnitudes and do not contain the pattern of massive activations. Note that both $\boldsymbol{y}^*$ and $\boldsymbol{\omega}_{j^*}$ are computed only from the pretrained weight matrices. We show that when the value of channel $i^*$ is large enough, the actual output $\boldsymbol{y}$ is dominated by $\boldsymbol{\omega}_{j^*}$, which leads to $\boldsymbol{y} \approx x^2 \boldsymbol{y}^* \approx x^2 \alpha_{j^*} \boldsymbol{\omega}_{j^*}$. If $\alpha_{j^*}$ has a large magnitude and $\boldsymbol{\omega}_{j^*}$ is highly concentrated (i.e., contains the pattern of massive activations), the output $\boldsymbol{y}$ should exhibit the same pattern. Below, we summarize the mechanism by which the MAGC in the input to a spike FFN leads to its output that exhibits massive activations.

\begin{keytakeaway}[Key Takeaway 2:]
~When the MAGC of an input embedding to the spike FFN takes a sufficiently large (or small) value, the output of the FFN asymptotically approaches $x^2 \sum_{j=1}^m \alpha_j \boldsymbol{\omega}_j$, where $\boldsymbol{\omega}_j$ is the normalized $j$-th column of $\boldsymbol{W}_{\text{down}}$ of the FFN and $x$ is the value of the MAGC. Moreover, the coefficients $\boldsymbol{\alpha}=(\alpha_1,\dots,\alpha_m)$ concentrate on one (or a few) dimensions. As a result, the FFN output is reduced to one normalized column of $\boldsymbol{W}_{\text{down}}$ or a combination of a few normalized columns of $\boldsymbol{W}_{\text{down}}$, multiplied by their respective coefficient and $x^2$. Since these columns exhibit the shape of massive activations, the output therefore exhibits massive activations.
\end{keytakeaway}

\section{Extension}
\label{sec:extension}

\subsection{Extension to Other LLMs}
\label{sec:llm_extension}

\begin{table}[htbp]
    \centering
    \setlength{\tabcolsep}{5pt}
    \small
    \begin{tabular}{l|cccccc}
    \toprule
        Model Name & \makecell{Spike FFN Block} & MAGC($i^*$) & Direction & Concentrated Columns($J$)\\
        \midrule
        LLaMA-3.2-1B & 1 & 894 & + & 1417 \\
        LLaMA-2-7B & 1 & 310 & + & 10411, 7890 \\ 
        Qwen2.5-1.5B & 1 & 1395 & - & 5601, 3295, 3750 \\ 
        QwQ-32B & 4 & 4626  & - & 11696, 1368 \\
        Ministral-3-3B & 2 & 2  & - & 0, 2\\ 
        Deepseek-R1-8B & 1 & 717  & + & 2427, 198 \\ 
        \bottomrule
    \end{tabular}
    \caption{MAGC in different models. The spike FFN block, MAGC $i^*$, and concentrated column ids are counted from 0. 
    The concentrated columns are identified by first selecting the three columns ${\omega}_j$ of the down-projection matrix with the largest absolute $\alpha_j$ values and then filtering out the selected columns that do not satisfy $|\cos(\boldsymbol{\omega}_j,\boldsymbol{y}_0)| > 0.9$.
    }
    \label{tab:model_sum}
\end{table}

In this section, we examine the generality of our findings across LLMs of different model families and sizes. Specifically, we study LLaMA-2-7B, Qwen2.5-1.5B, QwQ-32B, Ministral-3-3B-Instruct-2512, and DeepSeek-R1-Distill-Llama-8B. Together with LLaMA-3.2-1B, these models cover four model families and range from 1B to 32B parameters. 
Based on our empirical findings, we propose an empirical method to identify the MAGC as detailed in Appendix \ref{appendix:results}. 
For each model, we repeat the procedure in the preceding sections: we localize the spike FFN, identify MAGC $i^*$ and its gating direction, and find the dominant column(s) $J$ of $\boldsymbol{W}_{\text{down}}$. Table \ref{tab:model_sum} summarizes the results, and the corresponding visualizations are provided in Appendix \ref{appendix:results}. 

We highlight the following findings. 
First, the spike FFN consistently occurs near the beginning of the network. Notably, even for QwQ-32B with 64 transformer blocks, the spike FFN is located at block 4. 
Second, the position $i^*$ of MAGC is fixed for a specific model but differs across models, as does the gating direction. 
Third, the dominant column is not always unique. For LLaMA-3.2-1B, the energy distribution $p_j(\boldsymbol{\alpha})$ is dominated by a single column of $\boldsymbol{W}_{\text{down}}$. For the remaining models, however, two or three columns share the dominant energy. 
As shown in Appendix \ref{appendix:results}, the columns in $J$ exhibit highly similar shapes and concentrate their mass on the same small set of output channels, so their weighted combination $\boldsymbol{y}^*$ remains concentrated and reproduces the pattern of massive activations. In contrast, a randomly selected column distributes its mass much more broadly across channels.




\subsection{Relation to Attention Sink}
\label{sec:app_sink}


\begin{wrapfigure}{r}{0.5\linewidth}
    \centering
    \vspace{-\intextsep}
    \includegraphics[width=\linewidth]{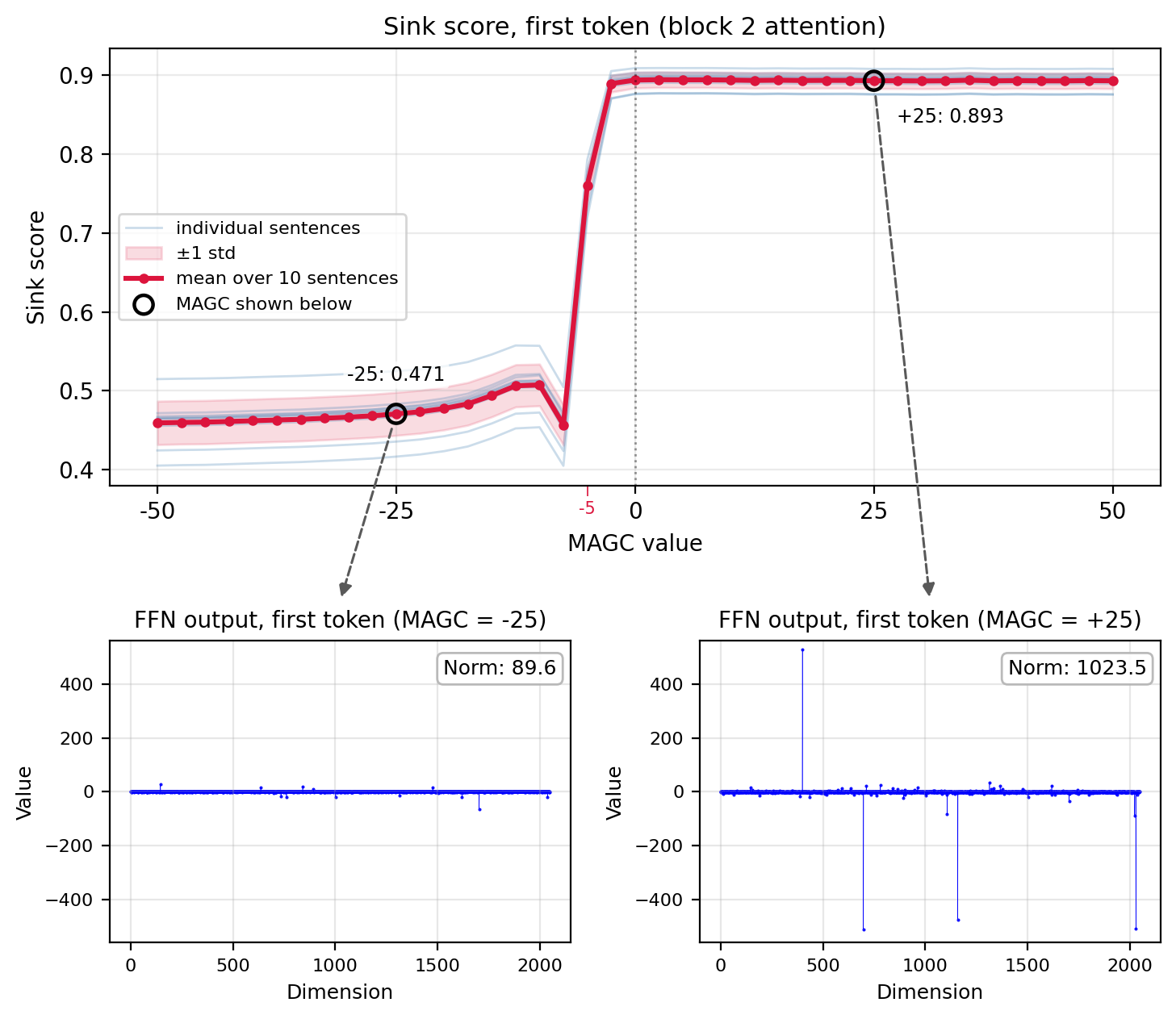}
    \caption{Sink score of the first token in LLaMA-3.2-1B, measured in the attention of block 2, as a function of the value assigned to MAGC ($i^*=894$) of the first token at the input of the spike FFN (block 1). Thin curves correspond to the ten individual sentences, the thick curve to their mean, and the shaded area to one standard deviation. The unmodified average value of MAGC is $8.81$.}
    \label{fig:magc_sink}
\end{wrapfigure}
Attention sink is commonly attributed to massive activations \citep{sun2024massive,yan2024unveiling}. Our previous results show that massive activations can be controlled by MAGC. We now extend this finding to adjust attention sink using MAGC.

\textbf{Experimental Setup.} We feed ten natural sentences to LLaMA-3.2-1B and intervene only on MAGC of the first token: at the input to the spike FFN (block 1), we overwrite $h_0(i^*)$ with a prescribed value $c$ and leave all other channels and all other tokens unchanged. We then measure the attention sink of the first token in the following block (block 2) using the sink score \citep{gu2025attention},
\begin{equation}
\label{eq:sink_score}
    s_0 = \frac{1}{H(T+1)}\sum_{h=0}^{H-1}\sum_{t=0}^{T} A_{t,0}^{h},
\end{equation}
where $A_{t,0}^{h}$ denotes the post-softmax attention that the $t$-th query assigns to the first token in the $h$-th head, $T+1$ is the sequence length, and $H$ is the number of attention heads. Since the first token is visible to all queries under the causal mask, $s_0$ is the average fraction of attention that the first token receives, and $s_0$ close to 1 indicates a dominant attention sink.

\textbf{Results.} 
Figure \ref{fig:magc_sink} reports the sink score $s_0$ as a function of MAGC value $c$. Without intervention, the original MAGC value is $8.81$, at which the first token attracts $s_0=0.89$ of the attention mass. 
As $c$ varies, the sink score shows a sharp step-like transition. 
For $c\geq -2.5$, $s_0$ stays near $0.89$, together with massive activations, as shown at $c=+25$. 
For sufficiently negative $c$, the sink collapses. The sink score drops to about $0.46$ for $c\leq -7.5$, while the massive activations disappear, as shown at $c=-25$. The transition is consistent across sentences, with standard deviation below $0.03$ throughout the sweep. Therefore, adjusting the MAGC value in the embedding of an input spike token to the spike FFN can adjust the sink mass on the token.




\section{Conclusion and Future Work }

In this study, we show how tokens develop massive activations in pretrained transformers and identify the massive activation gating channel (MAGC): a channel in the input to an early FFN whose value gates the emergence of massive activations. We show both empirically and theoretically that once MAGC is large or small enough, the spike FFN asymptotically collapses into a quadratic mixing of a few highly concentrated columns of the down-projection matrix, which reproduces the massive-activation pattern. This finding holds across six LLMs from four families ranging from 1B to 32B parameters.

Our analysis has several limitations that suggest directions for future work. First, it focuses on decoder-only LLMs with gated (such as SwiGLU) FFNs; extending the framework to mixture-of-experts architectures is a natural next step. Second, we localize a single spike FFN and a single gating channel per model, whereas a few models exhibit more diffuse behavior across several channels or columns; a multi-channel account may be needed for such cases. Finally, our study is purely mechanistic at inference time and does not explain why a particular channel becomes MAGC during pretraining, which we leave for future investigation.

\subsection*{AI use statement}

In this work, we used generative AI tools for paper proofreading and code writing. We have not used generative AI tools for literature review, idea formalization, and drafting the manuscript. We have reviewed all AI-assisted work. For example, LLM-generated code was verified by the authors. We take responsibility for the originality of this work.

\subsection*{Reproducibility statement}

We have released all code needed to reproduce the results in this work at GitHub\footnote{\url{https://github.com/MinjiaMao/magc}}. It includes scripts for identifying the MAGC, running the intervention experiments, and generating figures. All experiments use publicly available pretrained models. 

\bibliography{iclr2025_conference}
\bibliographystyle{iclr2025_conference}

\newpage 
\appendix

\setcounter{figure}{0}
\setcounter{table}{0}
\setcounter{section}{0}
\setcounter{equation}{0}

\renewcommand{\thetable}{A\arabic{table}}
\renewcommand{\thefigure}{A\arabic{figure}}
\renewcommand{\theequation}{A\arabic{equation}}

\section{Generalization of Near Identity Assumption for SiLU}
\label{appendix:violation}

Following \citet{sun2026spike}, we compute the cosine similarity between the gate pre-activation $\boldsymbol{W}_{\text{gate}}\boldsymbol{h}_t$ and its activation $\text{SiLU}(\boldsymbol{W}_{\text{gate}}\boldsymbol{h}_t)$, and the ratio of their $\ell_2$ norms for spike tokens.
Using the randomly generated sequences described in Section~3, we report in Table~\ref{tab:SwiGLU_identity} both metrics averaged over the first tokens of these sequences. For SiLU to behave approximately as an identity function, both metrics should be close to 1. Among the models examined, this condition is approximately satisfied only by LLaMA-2-7B.

\begin{table}[htbp]
    \centering
    \setlength{\tabcolsep}{5pt}
    \small
    \begin{tabular}{llc}
    \toprule
    Model & Metric & Token 0 \\
    \midrule
    \multirow{2}{*}{LLaMA-2-7B}
    & Cosine sim. & 0.9719   \\
    & Norm ratio  & 0.8103   \\
    \midrule
    \multirow{2}{*}{LLaMA-3.2-1B}
    & Cosine sim. & 0.8623  \\
    & Norm ratio  & 0.6232  \\
    \midrule
    \multirow{2}{*}{Qwen2.5-1.5B}
    & Cosine sim. & 0.2276   \\
    & Norm ratio  & 0.1564   \\
    \midrule
    \multirow{2}{*}{QwQ-32B}
    & Cosine sim. & 0.2990  \\
    & Norm ratio  & 0.1186  \\
    \midrule
    \multirow{2}{*}{Ministral-3-3B}
    & Cosine sim. & 0.7895  \\
    & Norm ratio  & 0.7897  \\
    \midrule
    \multirow{2}{*}{Deepseek-R1-8B}
    & Cosine sim. & 0.9215   \\
    & Norm ratio  & 0.7720   \\
    \bottomrule
    \end{tabular}
    \caption{Input-output characteristics of SiLU in the spike FFN across models. }
    \label{tab:SwiGLU_identity}
\end{table}

\begin{figure*}[htb]
    \centering
    \begin{subfigure}[b]{0.7\textwidth}
        \centering
        \includegraphics[width=\linewidth]{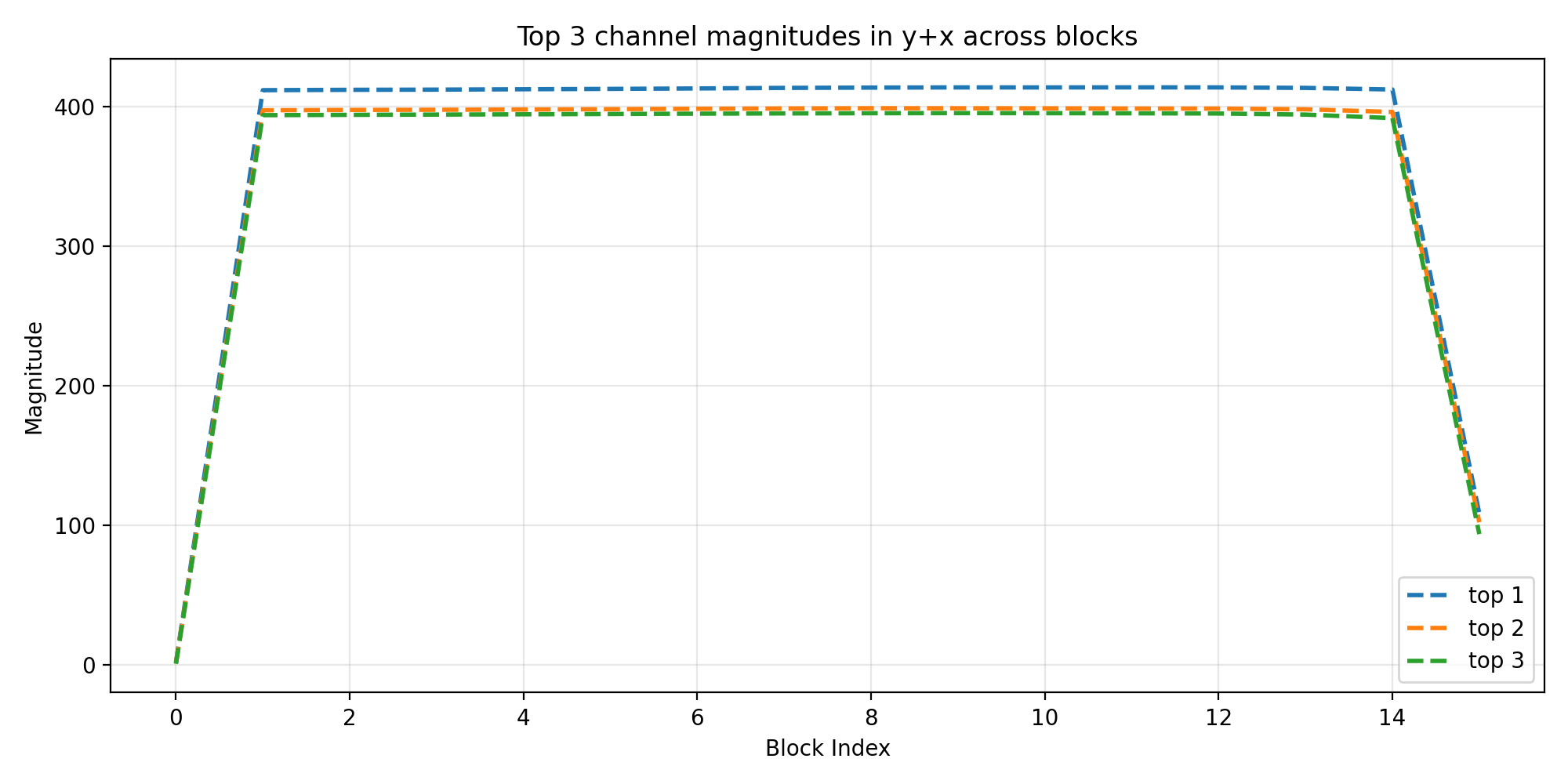}
    \end{subfigure}
    \hfill
    \begin{subfigure}[b]{0.7\textwidth}
        \centering
        \includegraphics[width=\linewidth]{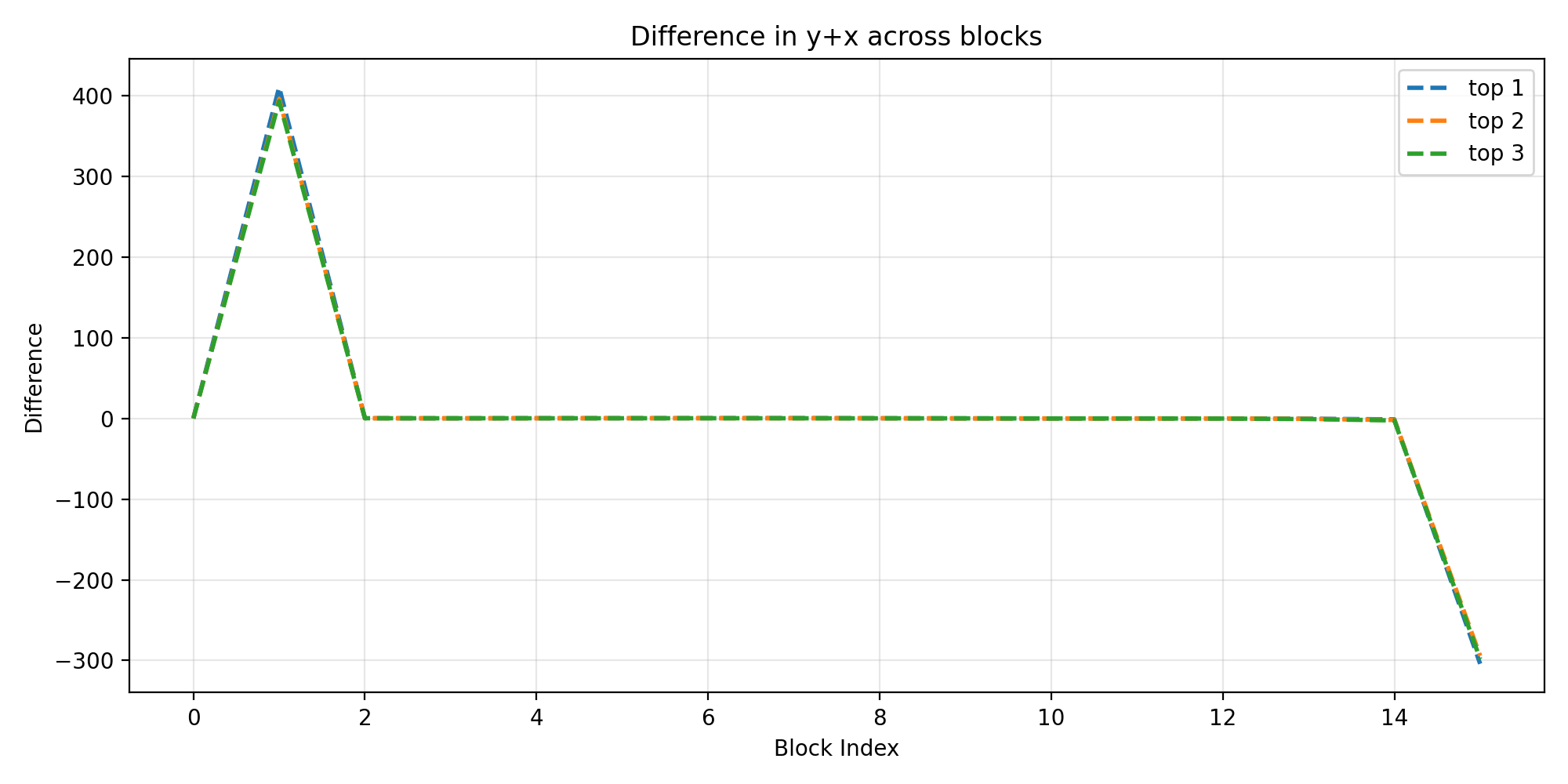}
    \end{subfigure}
    \caption{Top-3 channel magnitudes of transformer block outputs $x+u+y$ in Llama-3.2-1B and their differences.}
    \label{fig:llama1_top3}
\end{figure*}

\section{Additional Experiments for LLaMA-3.2-1B}
\label{appendix:exp_llama1}

\begin{figure*}[htbp]
    \centering

    \begin{subfigure}[b]{0.35\textwidth}
        \centering
        \includegraphics[width=\linewidth]{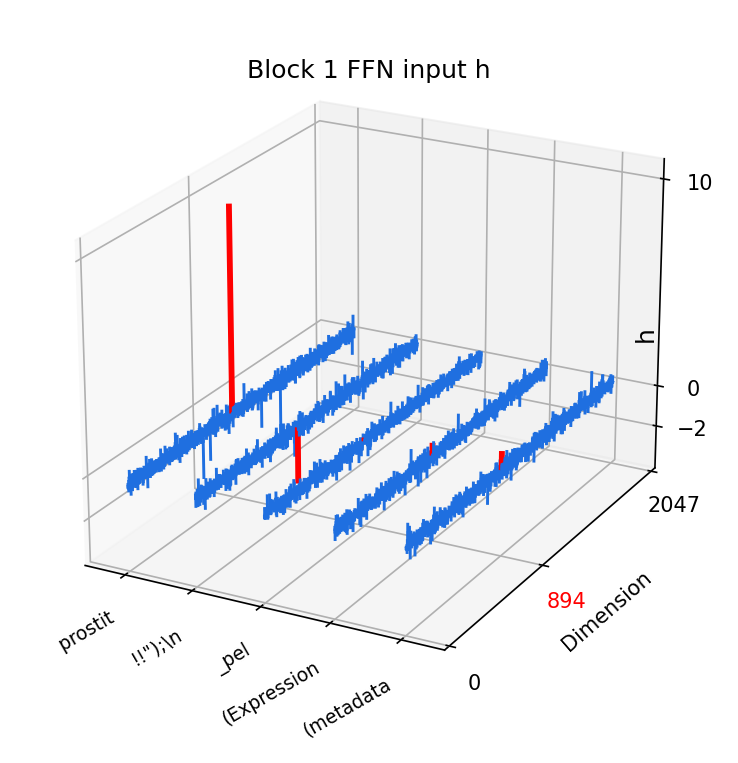}
    \end{subfigure}
    \begin{subfigure}[b]{0.35\textwidth}
        \centering
        \includegraphics[width=\linewidth]{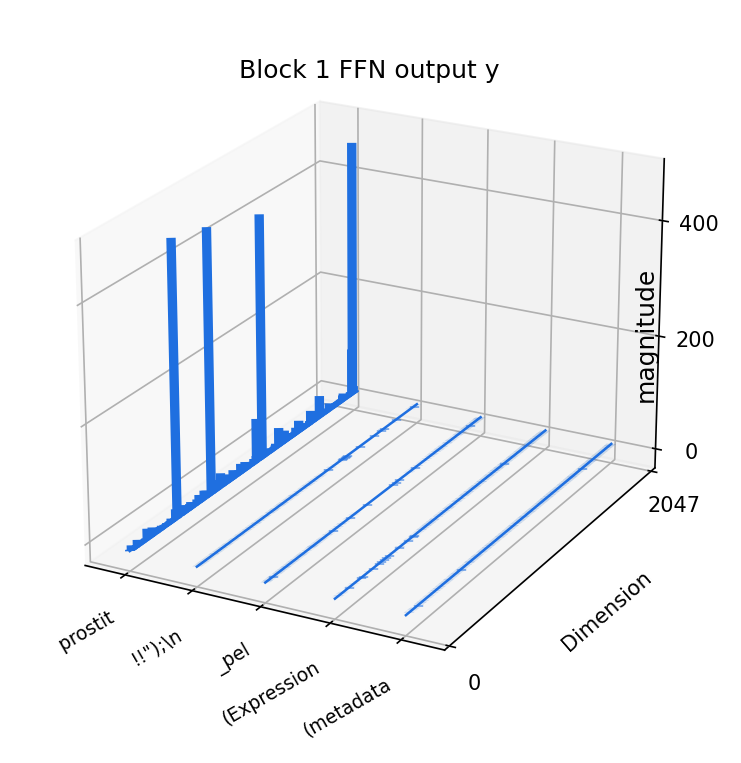}
    \end{subfigure}

    \begin{subfigure}[b]{0.35\textwidth}
        \centering
        \includegraphics[width=\linewidth]{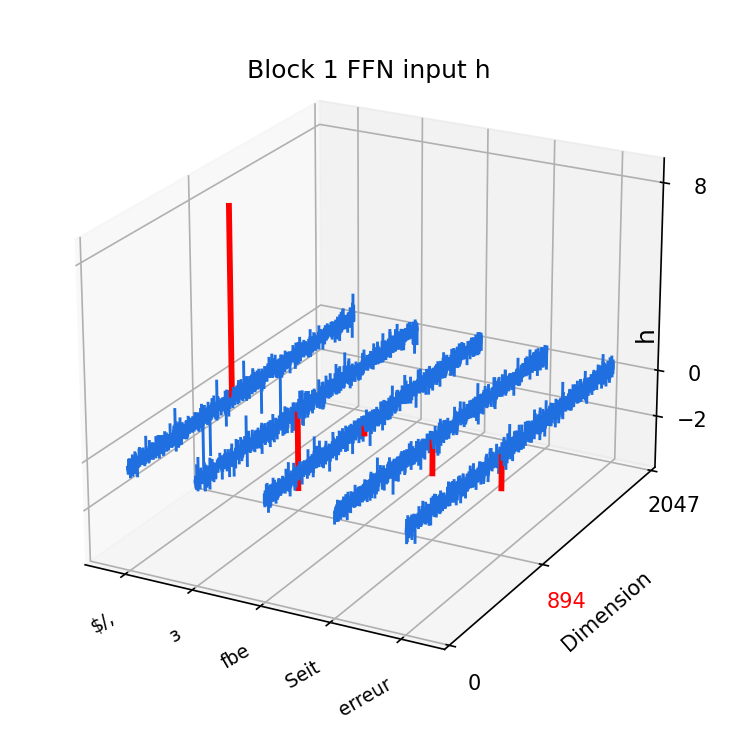}
    \end{subfigure}
    \begin{subfigure}[b]{0.35\textwidth}
        \centering
        \includegraphics[width=\linewidth]{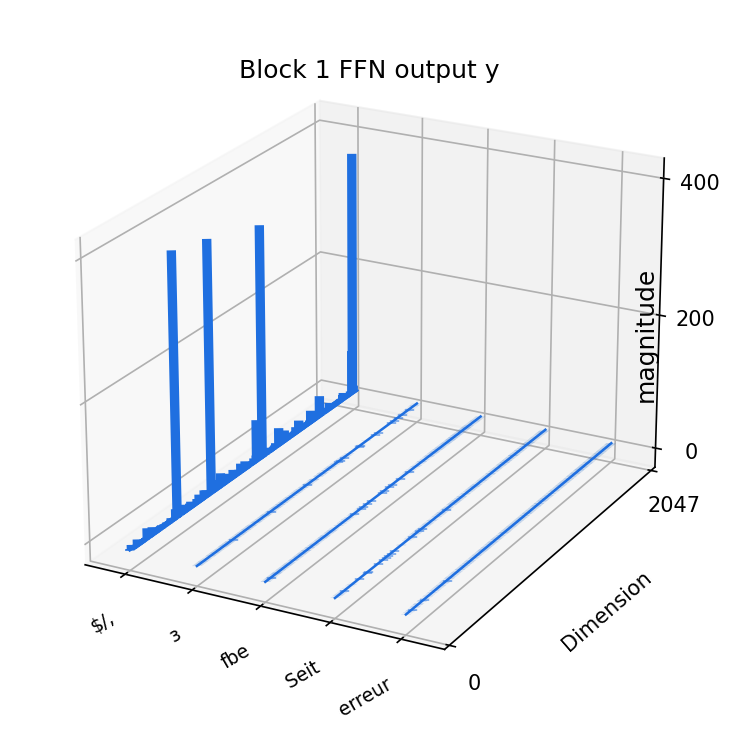}
    \end{subfigure}

    \begin{subfigure}[b]{0.35\textwidth}
        \centering
        \includegraphics[width=\linewidth]{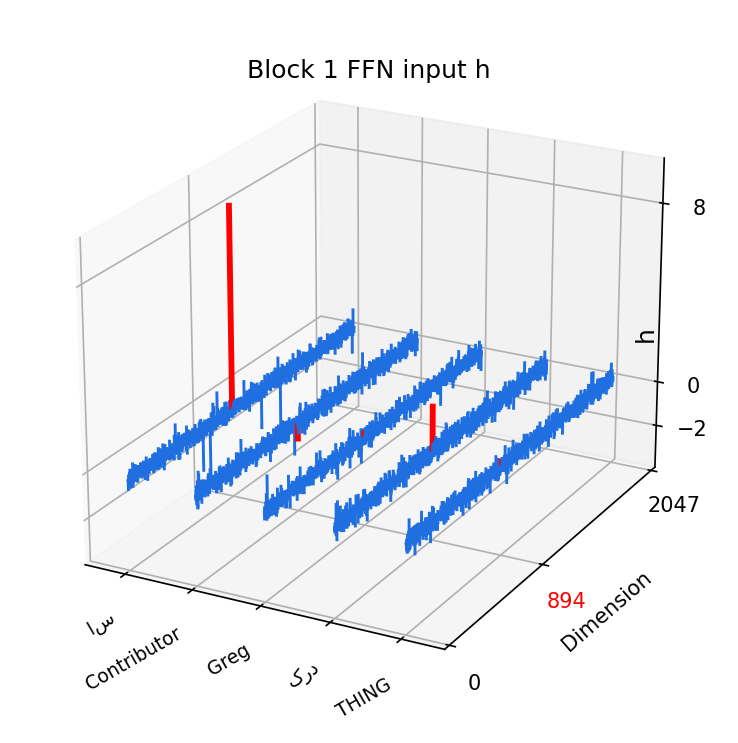}
    \end{subfigure}
    \begin{subfigure}[b]{0.35\textwidth}
        \centering
        \includegraphics[width=\linewidth]{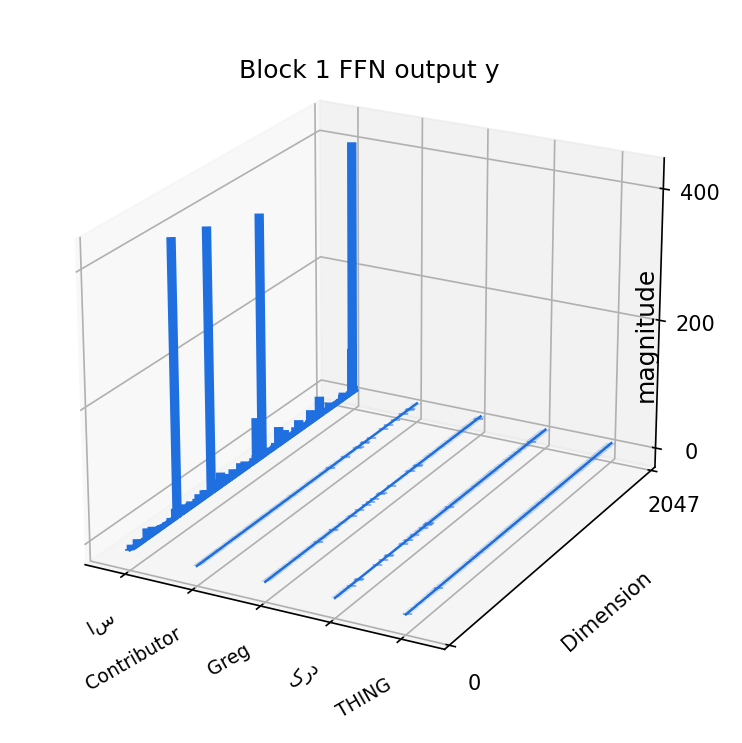}
    \end{subfigure}

    \begin{subfigure}[b]{0.35\textwidth}
        \centering
        \includegraphics[width=\linewidth]{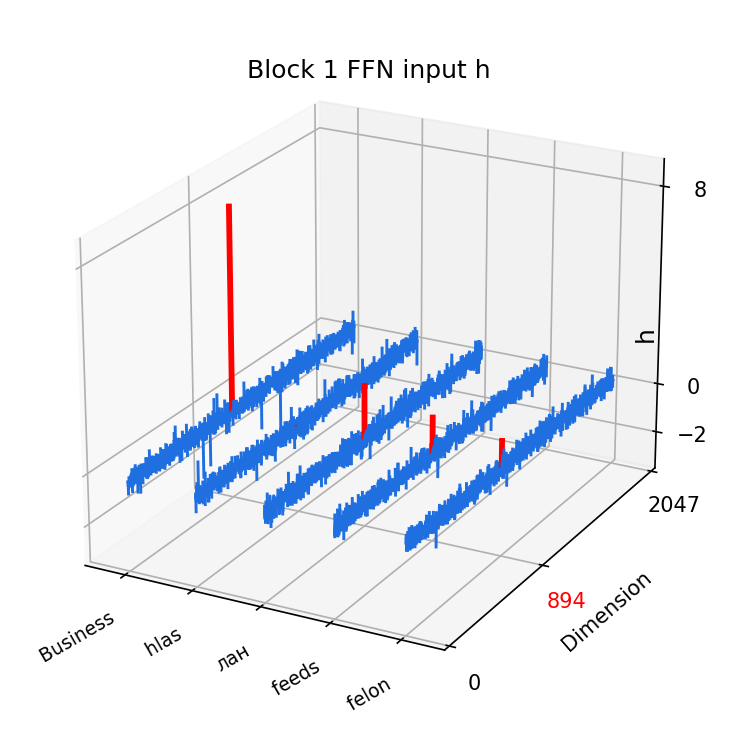}
    \end{subfigure}
    \begin{subfigure}[b]{0.35\textwidth}
        \centering
        \includegraphics[width=\linewidth]{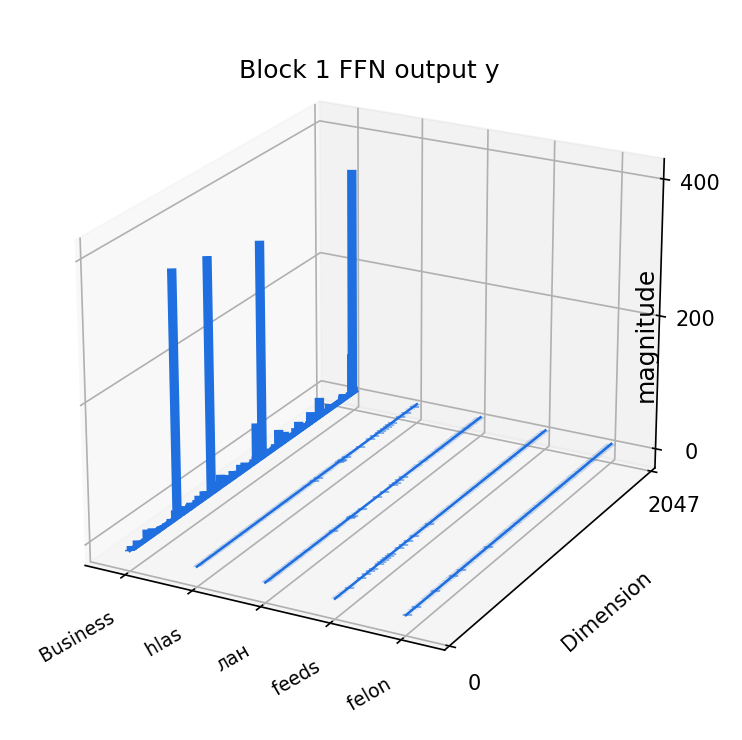}
    \end{subfigure}

    \caption{Inputs and Outputs of the Spike FFN.}
    \label{fig:llama_token_other}
\end{figure*}

\subsection{Massive Activations}

Figure \ref{fig:llama1_top3} shows the block-wise differences of the top-3 channel magnitudes of the transformer block outputs $\boldsymbol{x}+\boldsymbol{u}+\boldsymbol{y}$ in LLaMA-3.2-1B. As depicted, the differences exhibit a single sharp jump at the second transformer block, where the spike FFN triggers massive activations. In the subsequent middle blocks, the differences remain close to zero, indicating that no further massive activations are triggered and that the residual stream alone preserves the existing ones until the late blocks.

\subsection{Input and Output of the Spike FFN for other Sample Sequences}
\label{appendix:other_sample}

Following Section \ref{sec:3}, we depict the activations of the other 4 randomly generated sequences in Figure \ref{fig:llama_token_other}.

\subsection{Pareto Inputs for the Spike FFN}
\label{appendix:a3a5}


\begin{figure*}[htbp]
    \centering
    \hfill
    \begin{subfigure}[b]{0.48\textwidth}
        \centering
        \includegraphics[width=\linewidth]{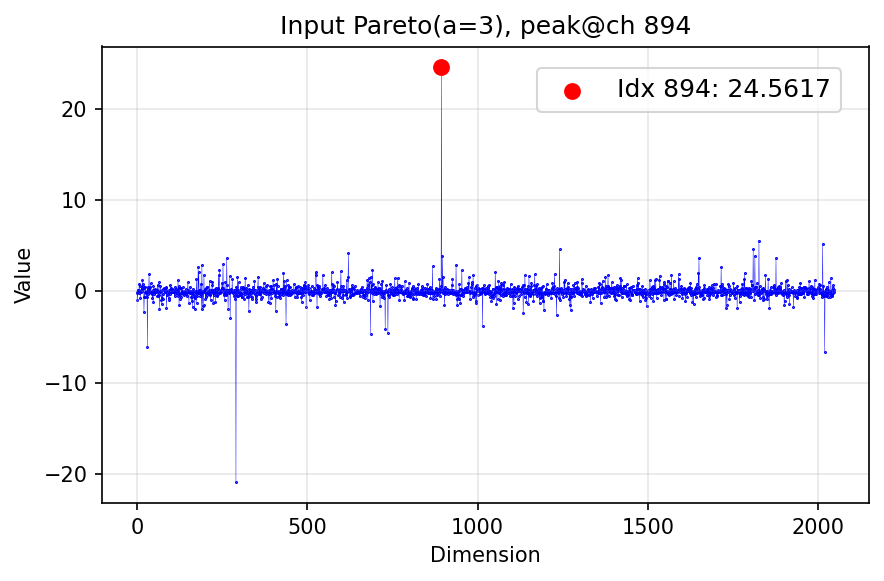}
    \end{subfigure}
    \hfill
    \begin{subfigure}[b]{0.48\textwidth}
        \centering
        \includegraphics[width=\linewidth]{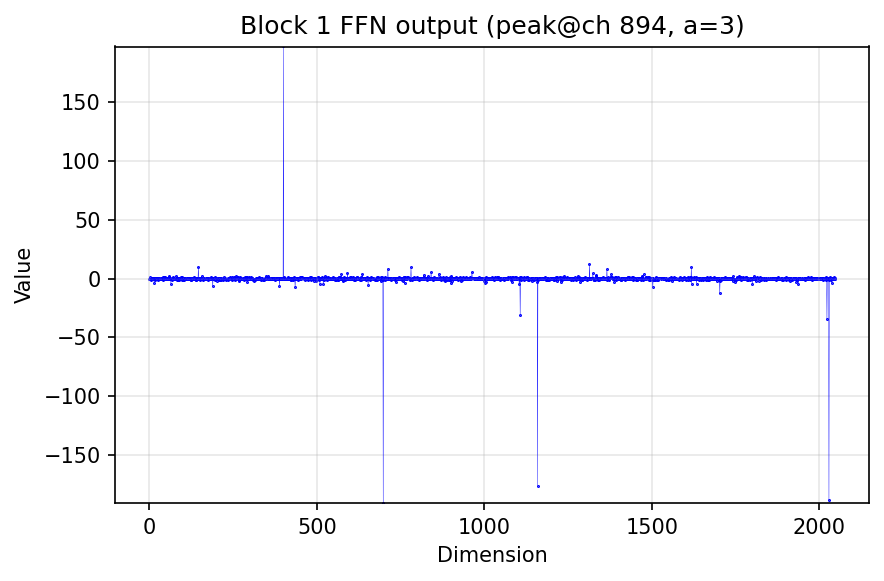}
    \end{subfigure}
    
    \hfill
    \begin{subfigure}[b]{0.48\textwidth}
        \centering
        \includegraphics[width=\linewidth]{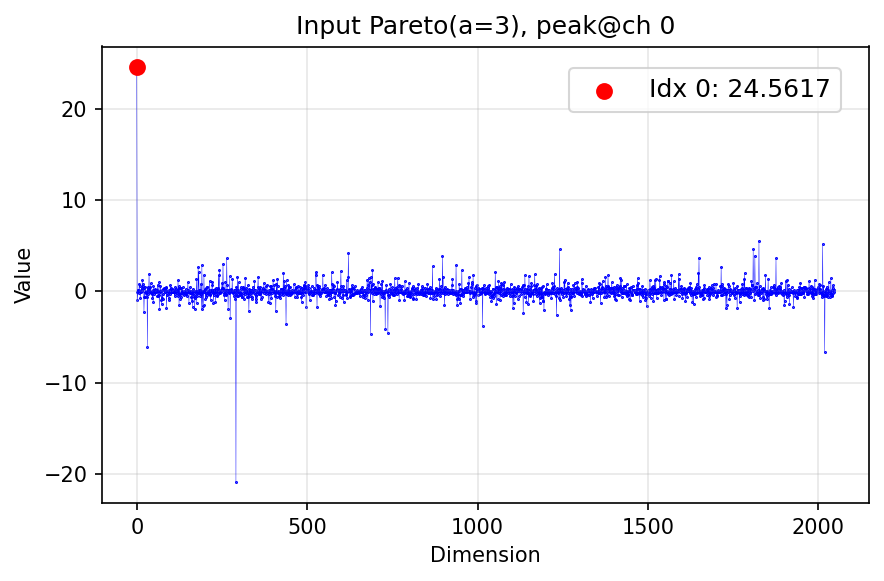}
    \end{subfigure}
    \hfill
    \begin{subfigure}[b]{0.48\textwidth}
        \centering
        \includegraphics[width=\linewidth]{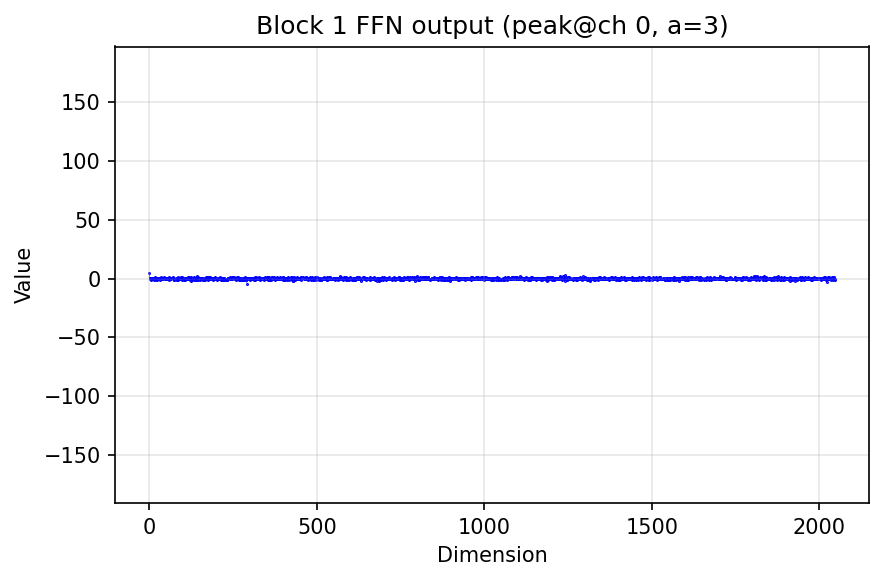}
    \end{subfigure}

    \hfill
    \begin{subfigure}[b]{0.48\textwidth}
        \centering
        \includegraphics[width=\linewidth]{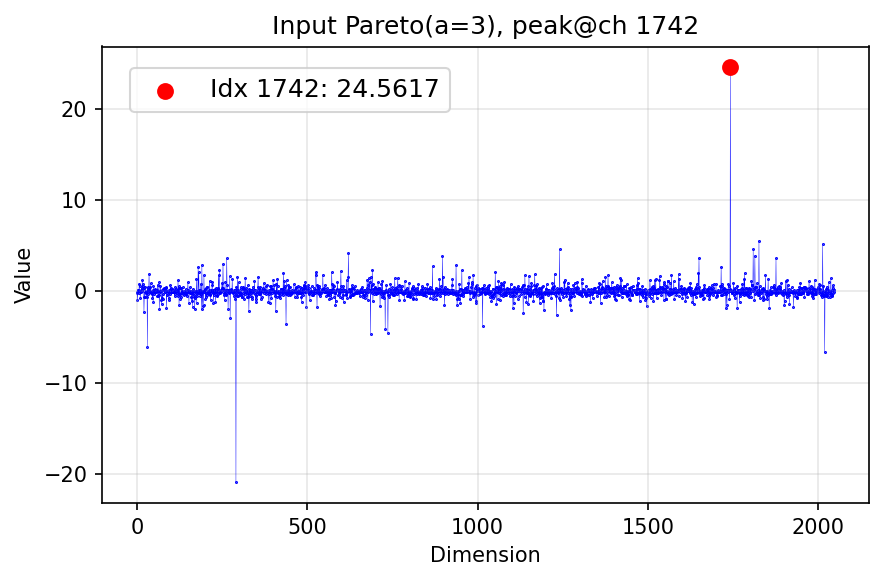}
    \end{subfigure}
    \hfill
    \begin{subfigure}[b]{0.48\textwidth}
        \centering
        \includegraphics[width=\linewidth]{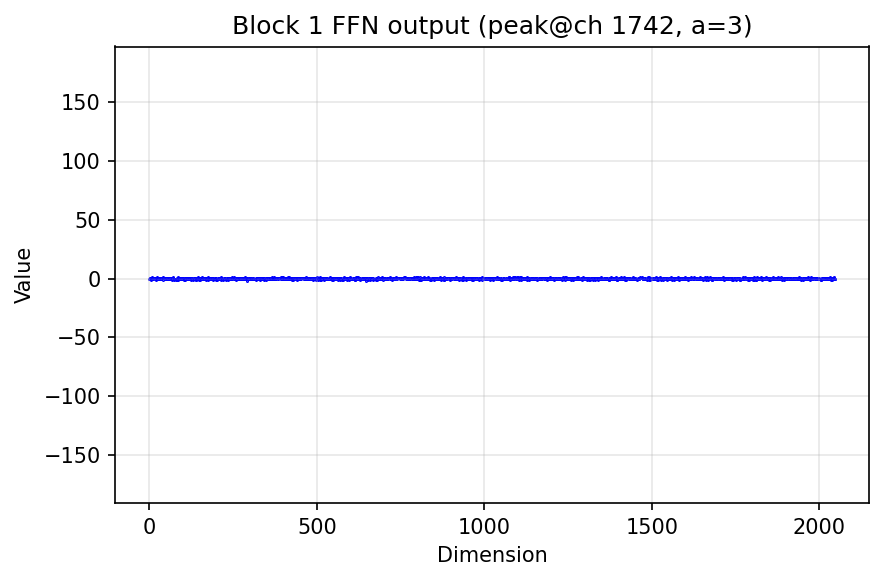}
    \end{subfigure}

    \caption{From top to bottom, the figure shows the output of the spike FFN when the input vector values are randomly sampled from a Pareto distribution ($a=3$), with the input value of the $894$th (MAGC), $0$th, or $1742$nd channel swapped with the maximum input value, respectively.}
    \label{fig:layer1FFN_a3}
\end{figure*}

\begin{figure*}[htb]
    \centering
    \hfill
    \begin{subfigure}[b]{0.48\textwidth}
        \centering
        \includegraphics[width=\linewidth]{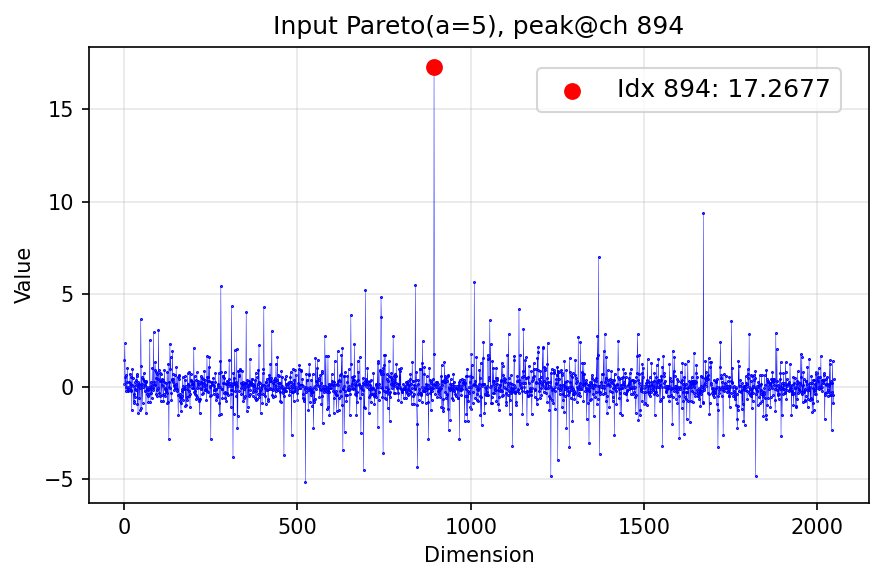}
    \end{subfigure}
    \hfill
    \begin{subfigure}[b]{0.48\textwidth}
        \centering
        \includegraphics[width=\linewidth]{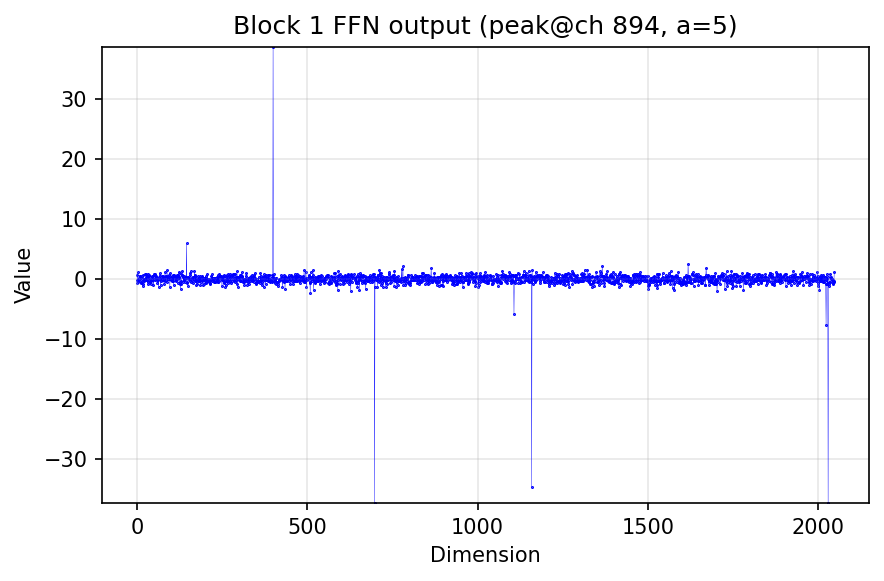}
    \end{subfigure}
    
    \hfill
    \begin{subfigure}[b]{0.48\textwidth}
        \centering
        \includegraphics[width=\linewidth]{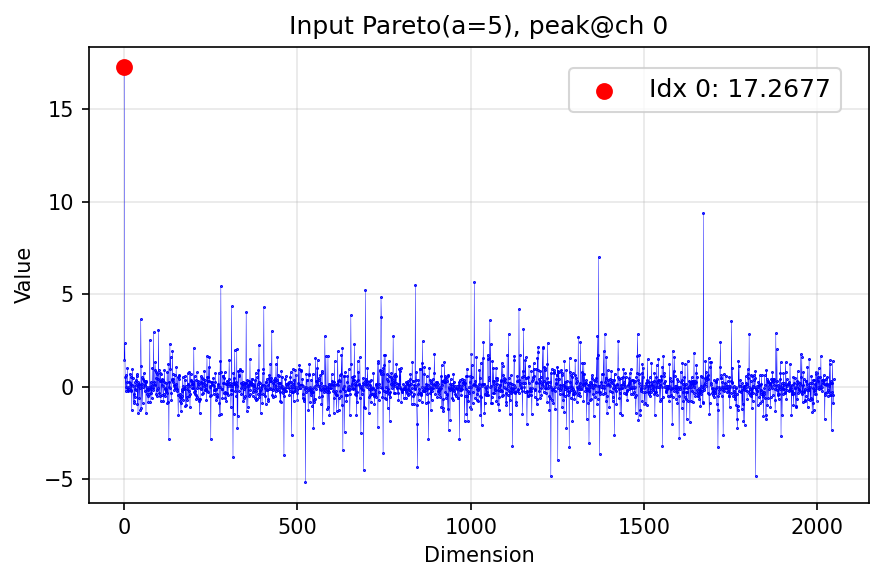}
    \end{subfigure}
    \hfill
    \begin{subfigure}[b]{0.48\textwidth}
        \centering
        \includegraphics[width=\linewidth]{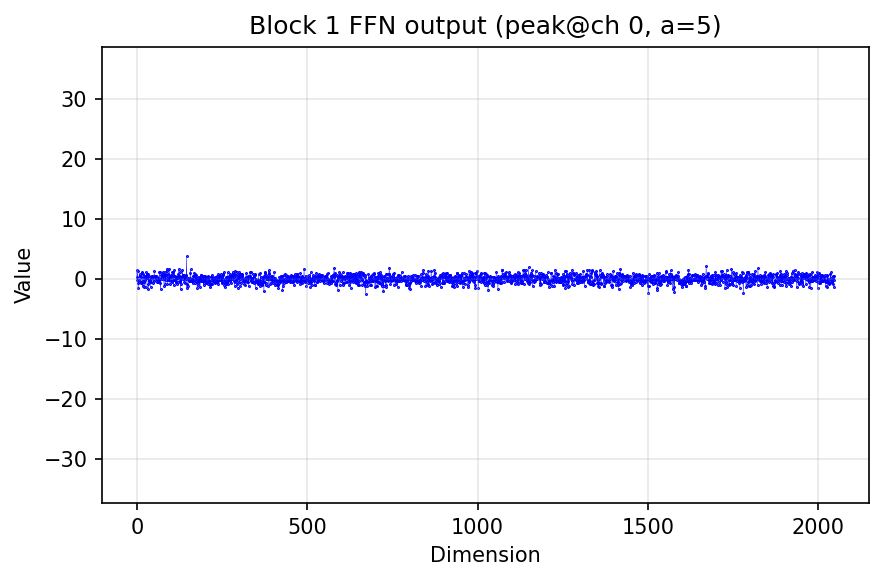}
    \end{subfigure}

    \hfill
    \begin{subfigure}[b]{0.48\textwidth}
        \centering
        \includegraphics[width=\linewidth]{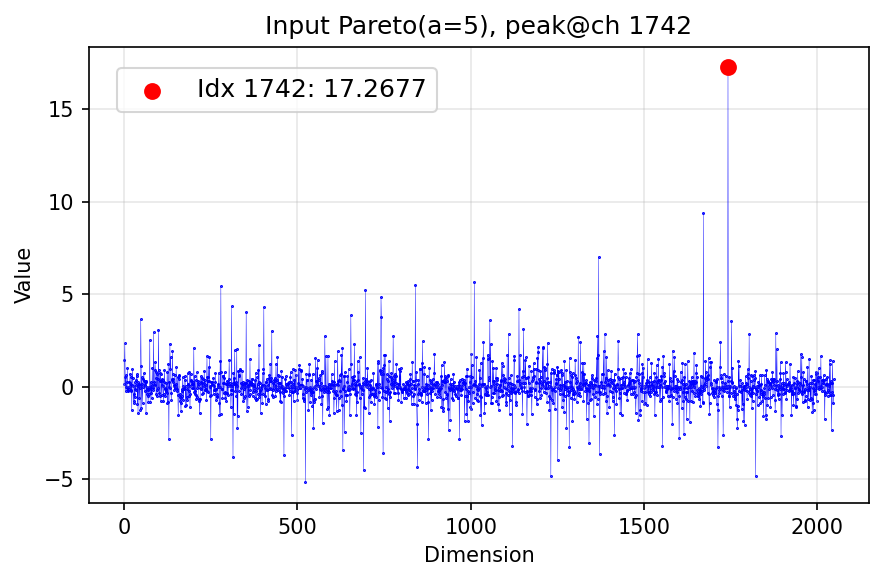}
    \end{subfigure}
    \hfill
    \begin{subfigure}[b]{0.48\textwidth}
        \centering
        \includegraphics[width=\linewidth]{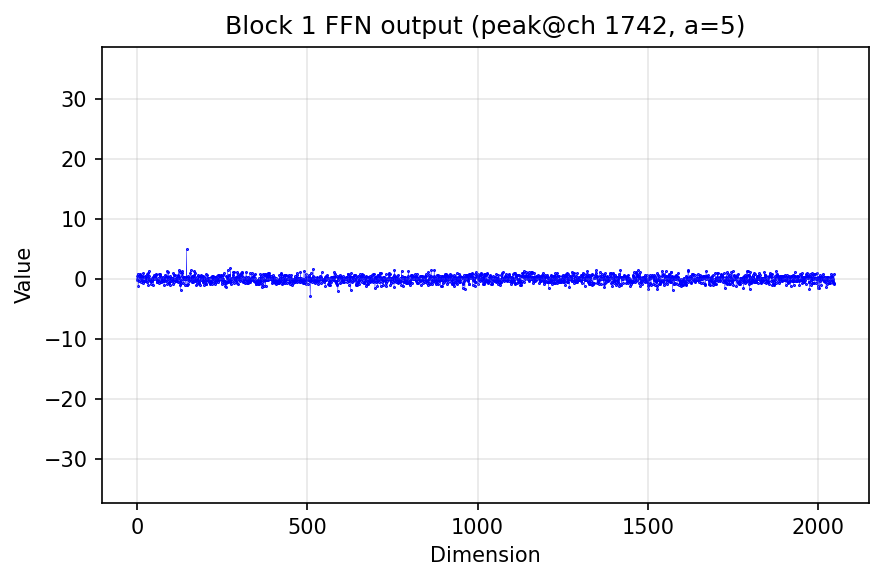}
    \end{subfigure}

    \caption{From top to bottom, the figure shows the output of the spike FFN when the input vector values are randomly sampled from a Pareto distribution ($a=5$), with the input value of the $894$th (MAGC), $0$th, or $1742$nd channel swapped with the maximum input value, respectively.}
    \label{fig:layer1FFN_a5}
\end{figure*}

\section{Proofs}

\subsection{Proof of Theorem 4.1}
\label{appendix:proof1}

\begin{theorem}
Given input $\boldsymbol{h}(x)=x \cdot \boldsymbol{e}_{i}+ \boldsymbol{\bar h}$.
When $x\to +\infty$, the output of an FFN has a quadratic form $\boldsymbol{y}(\boldsymbol{h}(x)) = x^2 \cdot \boldsymbol{y}^*+o(x^2)$ with constant $\boldsymbol{y}^*\in \mathbb{R}^d$ equals to 
\begin{align}
    \boldsymbol{y}^* = \sum_{j=1}^m \alpha_j\cdot \boldsymbol{\omega}_j,
\end{align}
where 
$\boldsymbol{\alpha} = (\alpha_1,\dots,\alpha_m)$ is the combination of coefficients in the weight matrices:
\begin{align}
    \alpha_j = \begin{cases}
        W_{\mathrm{gate}}^{(j,i)} W_{\mathrm{up}}^{(j,i)} \left\| \boldsymbol{W}_{\mathrm{down}}^{(:,j)} \right\|_2 & W_{\mathrm{gate}}^{(j,i)} > 0 \\
        0 & W_{\mathrm{gate}}^{(j,i)} \leq  0
    \end{cases}.
\end{align}
\end{theorem}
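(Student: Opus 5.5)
The proof works coordinatewise on the intermediate layer and then reassembles the output. For each $j\in\{1,\dots,m\}$ write the gate and up pre-activations as affine functions of $x$:
\begin{align*}
g_j(x) = (\boldsymbol{W}_{\mathrm{gate}}\boldsymbol{h}(x))_j = a_j x + b_j,\qquad u_j(x) = (\boldsymbol{W}_{\mathrm{up}}\boldsymbol{h}(x))_j = c_j x + e_j,
\end{align*}
where $a_j = W_{\mathrm{gate}}^{(j,i)}$, $c_j = W_{\mathrm{up}}^{(j,i)}$, and $b_j, e_j$ are the fixed contributions of $\boldsymbol{\bar h}$. Since $\boldsymbol{y}(\boldsymbol{h}(x)) = \sum_{j=1}^m \mathrm{SiLU}(g_j(x))\,u_j(x)\,\boldsymbol{W}_{\mathrm{down}}^{(:,j)}$, which is a finite sum, it suffices to find $\lim_{x\to+\infty} x^{-2}\,\mathrm{SiLU}(g_j(x))\,u_j(x)$ for each $j$. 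We then use $\boldsymbol{W}_{\mathrm{down}}^{(:,j)} = \|\boldsymbol{W}_{\mathrm{down}}^{(:,j)}\|_2\,\boldsymbol{\omega}_j$. Columns with $\boldsymbol{W}_{\mathrm{down}}^{(:,j)}=0$ contribute nothing, and $\alpha_j\boldsymbol{\omega}_j$ is then read as $0$.

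We split into three cases on the sign of $a_j$.

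\textbf{Case $a_j>0$.} Here $g_j(x)\to+\infty$, so $\sigma(g_j(x))\to 1$ and $\mathrm{SiLU}(g_j(x)) = g_j(x)\sigma(g_j(x)) = a_j x + O(1)$. To be explicit, $g_j(1-\sigma(g_j)) = g_j\sigma(-g_j)\to 0$, since $t\,e^{-t}\to 0$. Multiplying by $u_j(x) = c_j x + O(1)$ gives $a_j c_j x^2 + O(x)$.

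\textbf{Case $a_j<0$.} Here $g_j(x)\to-\infty$, and $|\mathrm{SiLU}(g_j)| = |g_j|\,\sigma(g_j) \le |g_j|\,e^{g_j}$, which decays exponentially in $x$. Multiplying by the linearly growing $u_j$ still gives a quantity that tends to $0$, which is $o(x^2)$.

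\textbf{Case $a_j=0$.} Here $g_j = b_j$ is constant, so $\mathrm{SiLU}(b_j)\,u_j(x) = O(x) = o(x^2)$. This is why the theorem's definition of $\alpha_j$ includes equality in the zero case.

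Summing the finitely many $o(x^2)$ remainders gives $\boldsymbol{y}(\boldsymbol{h}(x)) = x^2\sum_{j:\,a_j>0} a_j c_j \|\boldsymbol{W}_{\mathrm{down}}^{(:,j)}\|_2\,\boldsymbol{\omega}_j + o(x^2)$. This is exactly $x^2\boldsymbol{y}^*$ with the stated $\alpha_j$.

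The only delicate step is the case $a_j>0$: we must show the SiLU deviates from the identity only by a bounded term and not by something of order $x$. This follows from the exponential decay of $g\,\sigma(-g)$. Everything else is bookkeeping over a finite index set.
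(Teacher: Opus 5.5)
Your proof is correct and follows essentially the same route as the paper's: write the gate and up pre-activations as affine functions of $x$, use the per-neuron limits $\mathrm{SiLU}(g_j(x))/x \to \max(a_j,0)$ and $u_j(x)/x \to c_j$, and recombine through the columns of $\boldsymbol{W}_{\mathrm{down}}$. Your three-case analysis supplies the justification of the SiLU limit that the paper only asserts. It also covers the $a_j=0$ and zero-column edge cases and gives a sharper $O(x)$ remainder.
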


\begin{proof}
Based on Equation \ref{equation:hx}, the linear representation of $\boldsymbol{W}_{\mathrm{gate}}\boldsymbol{h}(x)$ and $\boldsymbol{W}_{\mathrm{up}}\boldsymbol{h}(x)$ can be written as 
\begin{align}
    \boldsymbol{W}_{\mathrm{gate}}\boldsymbol{h}(x) = x \boldsymbol{W}_{\mathrm{gate}}^{(:,i)} + \boldsymbol{b}_{\mathrm{gate}} \\
    \boldsymbol{W}_{\mathrm{up}}\boldsymbol{h}(x) = x \boldsymbol{W}_{\mathrm{up}}^{(:,i)} + \boldsymbol{b}_{\mathrm{up}}
\end{align}
where $\boldsymbol{b}_{\mathrm{gate}}=\sum_{i=1}^d \bar h_i \boldsymbol{W}_{\mathrm{gate}}^{(:,i)}$ and $\boldsymbol{b}_{\mathrm{up}}=\sum_{i=1}^d \bar h_i \boldsymbol{W}_{\mathrm{up}}^{(:,i)}$ are both constant under $x$.

We have the asymptotic property of $\mathrm{SiLU}(x)$: 
\begin{align}
    \lim_{x\to +\infty} \frac{\mathrm{SiLU}(\boldsymbol{W}_{\mathrm{gate}}h(x))}{x}
    = \begin{cases}
        W_{\mathrm{gate}}^{(j,i)} & W_{\mathrm{gate}}^{(j,i)}\geq 0 \\
        0 & W_{\mathrm{gate}}^{(j,i)}<0
    \end{cases}
\end{align}
and the asymptotic property of $\boldsymbol{W}_{\mathrm{up}}\boldsymbol{h}(x)$:
\begin{align}
    \lim_{x\to +\infty} \frac{\boldsymbol{W}_{\mathrm{up}}\boldsymbol{h}(x)}{x} = \boldsymbol{W}_{\mathrm{up}}^{(:,i)}
\end{align}

Finally,
\begin{align}
        \lim_{x\to +\infty} \frac{\boldsymbol{y}(\boldsymbol{h}(x))}{x^2} =
        \boldsymbol{W}_{\mathrm{down}} \lim_{x\to +\infty}\frac{\mathrm{SiLU}(\boldsymbol{W}_{\mathrm{gate}}h(x))}{x} \odot \lim_{x\to +\infty} \frac{\boldsymbol{W}_{\mathrm{up}}\boldsymbol{h}(x)}{x}=\sum_{j=1}^m \alpha_j \cdot \boldsymbol{\omega}_j,
    \end{align}
    where 
$\boldsymbol{\alpha} = (\alpha_1,\dots,\alpha_m)$ is 
\begin{align}
    \alpha_j = \begin{cases}
        W_{\mathrm{gate}}^{(j,i)} W_{\mathrm{up}}^{(j,i)} \left\| \boldsymbol{W}_{\mathrm{down}}^{(:,j)} \right\|_2 & W_{\mathrm{gate}}^{(j,i)} > 0 \\
        0 & W_{\mathrm{gate}}^{(j,i)} \leq  0
    \end{cases}.
\end{align}
\end{proof}

\subsection{Proof of Proposition~\ref{prop:other_channel}}
\label{appendix:proof2}

Recall that
\[
    \boldsymbol{y}^*
    =
    \sum_{j=1}^m \alpha_j \boldsymbol{\omega}_j,
    \qquad
    \|\boldsymbol{\omega}_j\|_2=1 .
\]
The energy distribution of $\boldsymbol{\alpha}$ is
\[
    p_j(\boldsymbol{\alpha})
    =
    \frac{\alpha_j^2}{\|\boldsymbol{\alpha}\|_2^2},
    \qquad j=1,\dots,m.
\]
Let \(j^* = \arg\max_{j} p_j(\boldsymbol{\alpha}) \)
be the dominant coefficient index. Then \(\boldsymbol{y}^*\) can be decomposed into the dominant column and the residual part:
\[
    \boldsymbol{y}^*
    =
    \alpha_{j^*}\boldsymbol{\omega}_{j^*}
    +
    \sum_{j\neq j^*}\alpha_j\boldsymbol{\omega}_j .
\]
Define the residual vector
\[
    \boldsymbol r
    =
    \sum_{j\neq j^*}\alpha_j\boldsymbol{\omega}_j .
\]
The residual amplification factor is defined as in Definition~\ref{def:kappa},
\[
    \kappa
    =
    \frac{\|\boldsymbol r\|_2}
    {\sqrt{\sum_{j\neq j^*}\alpha_j^2}}.
\]

The angular deviation between the true asymptotic output and the dominant column is 
\[
    |\sin\angle(\boldsymbol{y}^*,\boldsymbol{\omega}_{j^*})|
    =
    \sqrt{
    1-
    \left(
    \frac{
    \boldsymbol{y}^*\cdot \boldsymbol{\omega}_{j^*}
    }{
    \|\boldsymbol{y}^*\|_2\|\boldsymbol{\omega}_{j^*}\|_2
    }
    \right)^2
    } .
\]
Finally, for \(\tau\in(0,1-\frac1d)\), define
\[
    \Delta(\tau)
    =
    -\tau\log \tau
    -(1-\tau)\log(1-\tau)
    +\tau\log(d-1).
\]

\begin{lemma}
\label{lemma:sinangle}
Assume \( |\sin\angle(\boldsymbol{y}^*,\boldsymbol{\omega}_{j^*})| \leq 1-\frac{1}{d} \)  .
Then,
\[
\left|R(\boldsymbol{y}^*)-R(\boldsymbol{\omega}_{j^*})\right|
\leq
\Delta\left(|\sin\angle(\boldsymbol{y}^*,\boldsymbol{\omega}_{j^*})| \right).
\]
\end{lemma}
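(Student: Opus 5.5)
The plan is to reduce the statement to a continuity bound for Shannon entropy. By Definition~\ref{def:concentrate}, $R(\boldsymbol{v})=\log d - H(\boldsymbol{p}(\boldsymbol{v}))$, where $H$ is the Shannon entropy. Hence
\[
\left|R(\boldsymbol{y}^*)-R(\boldsymbol{\omega}_{j^*})\right| = \left|H(\boldsymbol{p}(\boldsymbol{y}^*))-H(\boldsymbol{p}(\boldsymbol{\omega}_{j^*}))\right| .
\]
Note that $\boldsymbol{p}(\boldsymbol{v})$ is invariant under rescaling $\boldsymbol{v}$ and under flipping its sign. Assuming $\boldsymbol{y}^*\neq 0$ (otherwise the sine is undefined), I may replace $\boldsymbol{y}^*$ by the unit vector $\boldsymbol{u}=\pm \boldsymbol{y}^*/\|\boldsymbol{y}^*\|_2$, choosing the sign so that $c:=\boldsymbol{u}\cdot\boldsymbol{\omega}_{j^*}\geq 0$. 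Write $\boldsymbol{v}=\boldsymbol{\omega}_{j^*}$ and $s=|\sin\angle(\boldsymbol{y}^*,\boldsymbol{\omega}_{j^*})|=\sqrt{1-c^2}$. Then $p_k(\boldsymbol{y}^*)=u_k^2$ and $p_k(\boldsymbol{\omega}_{j^*})=v_k^2$.

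\textbf{Step 1: total variation is controlled by the angle.} Since $|u_k^2-v_k^2|=|u_k-v_k|\,|u_k+v_k|$, Cauchy--Schwarz gives
\[
\mathrm{TV}:=\tfrac12\sum_k |u_k^2-v_k^2| \leq \tfrac12\|\boldsymbol{u}-\boldsymbol{v}\|_2\,\|\boldsymbol{u}+\boldsymbol{v}\|_2 = \tfrac12\sqrt{(2-2c)(2+2c)} = \sqrt{1-c^2}=s .
\]

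\textbf{Step 2: continuity of entropy.} I will invoke the sharp Fannes--Audenaert inequality in its classical form (Audenaert 2007; Zhang 2007). For distributions $\boldsymbol{p},\boldsymbol{q}$ on $d$ points with $T=\mathrm{TV}(\boldsymbol{p},\boldsymbol{q})$, it states
\[
|H(\boldsymbol{p})-H(\boldsymbol{q})|\leq T\log(d-1)-T\log T-(1-T)\log(1-T)=\Delta(T).
\]
This is the step I expect to need the most care. I will check that the cited form uses the total variation distance with the $\tfrac12$ normalization, and that it holds on the whole range $T\le 1$. If a self-contained argument is preferred, I would prove it via the standard coupling or majorization argument. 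That argument reduces to the extremal pair, a point mass versus the distribution $(1-T,\,T/(d-1),\dots,T/(d-1))$.

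\textbf{Step 3: monotonicity of $\Delta$.} We have $\Delta'(t)=\log\frac{(1-t)(d-1)}{t}$, which is nonnegative exactly when $t\leq 1-\tfrac1d$. So $\Delta$ is nondecreasing on $[0,1-\tfrac1d]$, with $\Delta(0)=0$ by continuity. By the lemma's hypothesis $s\leq 1-\tfrac1d$, and by Step 1 $\mathrm{TV}\leq s$. Therefore $\Delta(\mathrm{TV})\leq\Delta(s)$, and chaining with Step 2 yields the claim. The boundary cases $\mathrm{TV}=0$ and $s=0$ are handled by the continuous extension $\Delta(0)=0$.
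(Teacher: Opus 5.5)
Your proposal is correct and follows the paper's proof essentially step for step: the Fannes--Audenaert bound $|H(\boldsymbol{p})-H(\boldsymbol{q})|\le\Delta(T)$ with $T$ the (half-$\ell_1$) total variation distance, then the factorization $|u_k^2-v_k^2|=|u_k-v_k|\,|u_k+v_k|$ with Cauchy--Schwarz to get $T\le|\sin\angle(\boldsymbol{y}^*,\boldsymbol{\omega}_{j^*})|$, then monotonicity of $\Delta$ on $[0,1-\frac1d]$. Your sign normalization $c\ge 0$ is harmless but unnecessary, since $(2-2c)(2+2c)=4(1-c^2)$ for either sign of $c$.
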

\begin{proof}
Based on the Fannes-Audenaert inequality:
\[
    |R(\boldsymbol{y}^*)-R(\boldsymbol{\omega}_{j^*})|=|H(\boldsymbol{\omega}_{j^*})-H(\boldsymbol{\omega}_{j^*})|
    \le
    \Delta(t),
\]
where $t:=\frac12\|p(\boldsymbol{y}^*)-p(\boldsymbol{\omega}_{j^*})\|_1$ and $H$ is the entropy function. We have
\[
\begin{aligned}
\frac12
\|p(\boldsymbol{y}^*)-p(\boldsymbol{\omega}_{j^*})\|_1
&=
\frac12
\sum_{i=1}^d
\left|
\left(
\frac{y_i^*}{\|\boldsymbol{y}^*\|_2}
\right)^2
-
\left(
\frac{\omega_{j^*,i}}{\|\boldsymbol{\omega}_{j^*}\|_2}
\right)^2
\right| \\
&=
\frac12
\sum_{i=1}^d
\left|
\left(
\frac{y_i^*}{\|\boldsymbol{y}^*\|_2}
-
\frac{\omega_{j^*,i}}{\|\boldsymbol{\omega}_{j^*}\|_2}
\right)
\left(
\frac{y_i^*}{\|\boldsymbol{y}^*\|_2}
+
\frac{\omega_{j^*,i}}{\|\boldsymbol{\omega}_{j^*}\|_2}
\right)
\right| \\
&\le
\frac12
\left[
\sum_{i=1}^d
\left(
\frac{y_i^*}{\|\boldsymbol{y}^*\|_2}
-
\frac{\omega_{j^*,i}}{\|\boldsymbol{\omega}_{j^*}\|_2}
\right)^2
\right]^{1/2}
\left[
\sum_{i=1}^d
\left(
\frac{y_i^*}{\|\boldsymbol{y}^*\|_2}
+
\frac{\omega_{j^*,i}}{\|\boldsymbol{\omega}_{j^*}\|_2}
\right)^2
\right]^{1/2} \\
&=
\frac12
\left[
2-
2\frac{
\boldsymbol{y}^*\cdot \boldsymbol{\omega}_{j^*}
}{
\|\boldsymbol{y}^*\|_2\|\boldsymbol{\omega}_{j^*}\|_2
}
\right]^{1/2}
\left[
2+
2\frac{
\boldsymbol{y}^*\cdot \boldsymbol{\omega}_{j^*}
}{
\|\boldsymbol{y}^*\|_2\|\boldsymbol{\omega}_{j^*}\|_2
}
\right]^{1/2} \\
&=
\left[
1-
\left(
\frac{
\boldsymbol{y}^*\cdot \boldsymbol{\omega}_{j^*}
}{
\|\boldsymbol{y}^*\|_2\|\boldsymbol{\omega}_{j^*}\|_2
}
\right)^2
\right]^{1/2} \\
&=
|\sin\angle(\boldsymbol{y}^*,\boldsymbol{\omega}_{j^*})|.
\end{aligned}
\]

Since \(\Delta(t)\) is increasing on \(0\le t\le 1-\frac1d\), 
\[
    \left|R(\boldsymbol{y}^*)-R(\boldsymbol{\omega}_{j^*})\right|
    \leq
    \Delta \left(
    \min\left\{|\sin\angle(\boldsymbol{y}^*,\boldsymbol{\omega}_{j^*})|,1-\frac1d\right\}
    \right)=\Delta \left(
    |\sin\angle(\boldsymbol{y}^*,\boldsymbol{\omega}_{j^*})|
    \right).
\]
\end{proof}

\begin{proposition} 
Let 
\begin{align}
   \tau=\kappa \sqrt{\frac{1-p_{j^*}(\boldsymbol{\alpha})}{p_{j^*}(\boldsymbol{\alpha})}}.
\end{align}
We have
\begin{align}
    \left|R(\boldsymbol{y}^*)-R(\boldsymbol{\omega}_{j^*})\right|
\leq
\Delta\left(
\tau
\right),
\end{align}
where for $\tau \in\left(0,1-\frac{1}{d}\right)$,
$
\Delta(\tau)
=
-\tau\log \tau-(1-\tau)\log(1-\tau)+\tau\log(d-1).
$
\end{proposition}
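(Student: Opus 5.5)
The plan is to reduce the proposition to Lemma~\ref{lemma:sinangle}. It then suffices to show that $|\sin\angle(\boldsymbol{y}^*,\boldsymbol{\omega}_{j^*})|\le\tau$ and that $\Delta$ is increasing on $[0,1-\frac1d]$. The monotonicity was already used in the proof of the lemma: $\Delta'(t)=\log\frac{(1-t)(d-1)}{t}\ge 0$ exactly when $t\le 1-\frac1d$. So the whole argument rests on the geometric bound on the sine.

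\textbf{Bounding the sine.} Write $\boldsymbol{y}^*=\alpha_{j^*}\boldsymbol{\omega}_{j^*}+\boldsymbol r$, with $\boldsymbol r$ the residual vector. Since $\|\boldsymbol{\omega}_{j^*}\|_2=1$, the sine of the angle between $\boldsymbol{y}^*$ and $\boldsymbol{\omega}_{j^*}$ equals the distance from $\boldsymbol{y}^*/\|\boldsymbol{y}^*\|_2$ to the line spanned by $\boldsymbol{\omega}_{j^*}$. Hence
\[
|\sin\angle(\boldsymbol{y}^*,\boldsymbol{\omega}_{j^*})|=\frac{\|P^\perp\boldsymbol{y}^*\|_2}{\|\boldsymbol{y}^*\|_2}=\frac{\|P^\perp\boldsymbol r\|_2}{\|\boldsymbol{y}^*\|_2},
\]
where $P^\perp$ is the projection onto $\boldsymbol{\omega}_{j^*}^\perp$. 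The numerator is at most $\|\boldsymbol r\|_2=\kappa\sqrt{\sum_{j\neq j^*}\alpha_j^2}=\kappa\|\boldsymbol{\alpha}\|_2\sqrt{1-p_{j^*}(\boldsymbol{\alpha})}$.

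\textbf{Lower bound on the denominator (main obstacle).} We need $\|\boldsymbol{y}^*\|_2\ge|\alpha_{j^*}|=\|\boldsymbol{\alpha}\|_2\sqrt{p_{j^*}(\boldsymbol{\alpha})}$, which would give the sine bound $\tau$ exactly. The triangle inequality only yields $\|\boldsymbol{y}^*\|_2\ge|\alpha_{j^*}|-\|\boldsymbol r\|_2$, which leads to the weaker bound $\tau/(1-\tau)$. I would therefore try the cleaner route first: the distance from a point to a line is at most its distance to any point on the line, so
\[
\|\boldsymbol{y}^*\|_2\,|\sin\angle|=\min_{c}\|\boldsymbol{y}^*-c\,\boldsymbol{\omega}_{j^*}\|_2\le\|\boldsymbol r\|_2 .
\]
This still divides by $\|\boldsymbol{y}^*\|_2$, so it does not remove the obstacle. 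The alternative is to use the angle between $\boldsymbol{y}^*$ and $\alpha_{j^*}\boldsymbol{\omega}_{j^*}$ as seen from the tip of $\alpha_{j^*}\boldsymbol{\omega}_{j^*}$: a perturbation of norm $\|\boldsymbol r\|_2$ rotates a vector of norm $|\alpha_{j^*}|$ by an angle $\theta$ with $\sin\theta\le\|\boldsymbol r\|_2/|\alpha_{j^*}|$. The maximum rotation occurs when $\boldsymbol r$ is tangent to the sphere of radius $\|\boldsymbol r\|_2$ around $\alpha_{j^*}\boldsymbol{\omega}_{j^*}$, and there the sine equals exactly $\|\boldsymbol r\|_2/|\alpha_{j^*}|$. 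This is the standard fact that the set of directions within a ball of radius $\rho<|a|$ around $a$ subtends half-angle $\arcsin(\rho/|a|)$.

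Using this fact (valid when $\tau<1$), we get $|\sin\angle|\le\|\boldsymbol r\|_2/|\alpha_{j^*}|=\tau$. The hypothesis $\tau\in(0,1-\frac1d)$ then also guarantees that the sine lies in the range where Lemma~\ref{lemma:sinangle} applies and where $\Delta$ is monotone. The chain $|R(\boldsymbol{y}^*)-R(\boldsymbol{\omega}_{j^*})|\le\Delta(|\sin\angle|)\le\Delta(\tau)$ follows.
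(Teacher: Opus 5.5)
Your proposal is correct and follows the paper's proof: both reduce to Lemma~\ref{lemma:sinangle} through $|\sin\angle(\boldsymbol{y}^*,\boldsymbol{\omega}_{j^*})|\le\|\boldsymbol r\|_2/|\alpha_{j^*}|=\tau$, together with the monotonicity of $\Delta$ on $[0,1-\frac1d]$. The only difference is how you justify the sine bound. The paper splits $\boldsymbol r=r_{\parallel}\boldsymbol{\omega}_{j^*}+\boldsymbol r_{\perp}$ and reduces the inequality to a perfect square, while you cite the tangent-cone fact, which also follows in one line from your own distance-to-a-line idea with the roles swapped: $|\alpha_{j^*}|\,|\sin\angle(\boldsymbol{y}^*,\boldsymbol{\omega}_{j^*})|=\mathrm{dist}(\alpha_{j^*}\boldsymbol{\omega}_{j^*},\mathbb{R}\boldsymbol{y}^*)\le\|\alpha_{j^*}\boldsymbol{\omega}_{j^*}-\boldsymbol{y}^*\|_2=\|\boldsymbol r\|_2$, with no need for $\tau<1$.
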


\begin{proof} 
Recall $
    \boldsymbol r
    =
    \sum_{j\neq j^*}\alpha_j\boldsymbol{\omega}_j ,
$
then
$
    \boldsymbol y^*
    =
    \alpha_{j^*}\boldsymbol{\omega}_{j^*}
    +
    \boldsymbol r .
$
Since \(\|\boldsymbol{\omega}_{j^*}\|_2=1\), decompose the residual vector as
\[
    \boldsymbol r
    =
    r_{\parallel}\boldsymbol{\omega}_{j^*}
    +
    \boldsymbol r_{\perp},
    \qquad
    \boldsymbol r_{\perp}\perp \boldsymbol{\omega}_{j^*}.
\]
We have
\[
    \boldsymbol y^*
    =
    (\alpha_{j^*}+r_{\parallel})\boldsymbol{\omega}_{j^*}
    +
    \boldsymbol r_{\perp}.
\]
The angular deviation from \(\boldsymbol{\omega}_{j^*}\) is caused only by the perpendicular residual component. Hence,
\[
\begin{aligned}
|\sin\angle(\boldsymbol y^*,\boldsymbol{\omega}_{j^*})|
&=
\frac{
\|\boldsymbol r_{\perp}\|_2
}{
\left[
(\alpha_{j^*}+r_{\parallel})^2
+
\|\boldsymbol r_{\perp}\|_2^2
\right]^{1/2}
} \\
&\le
\frac{
\sqrt{r_{\parallel}^2+\|\boldsymbol r_{\perp}\|_2^2}
}{
|\alpha_{j^*}|
}
=
\frac{\|\boldsymbol r\|_2}{|\alpha_{j^*}|}.
\end{aligned}
\]
The above inequality follows because, after squaring and rearranging, it is equivalent to
\[
    0
    \le
    \left(
    r_{\parallel}(\alpha_{j^*}+r_{\parallel})
    +
    \|\boldsymbol r_{\perp}\|_2^2
    \right)^2 .
\]
Moreover,
\[
\frac{\|\boldsymbol r\|_2}{|\alpha_{j^*}|}=
\kappa
\frac{
\sqrt{
\sum_{j\neq j^*}\alpha_j^2
}
}{
|\alpha_{j^*}|
}=\kappa \sqrt{\frac{
    1-p_{j^*}(\boldsymbol\alpha)
    }{
    p_{j^*}(\boldsymbol\alpha)
    }}.
\]

Therefore,
\[
|\sin\angle(\boldsymbol y^*,\boldsymbol{\omega}_{j^*})|
\le
\kappa
\sqrt{
\frac{
1-p_{j^*}(\boldsymbol\alpha)
}{
p_{j^*}(\boldsymbol\alpha)
}
}.
\]

Based on Lemma~\ref{lemma:sinangle}, it is sufficient to prove Proposition \ref{prop:other_channel}. 
\end{proof}

\section{Additional Results}
\label{appendix:results}

\textbf{Identification of the Spike FFN and MAGC.}
We identify the spike FFN and MAGC by comparing the first (spike) token and the second (non-spike) token of the randomly generated sequences. Let $\boldsymbol{y}_0^l$ denote the average FFN output of the first token in the $l$-th transformer block. We regard an FFN as the spike FFN if its output exhibits an outlier channel \citep{sun2024massive}, i.e.,
\begin{align}
\label{eq:spike_ffn_criterion}
    \max_j \left|y_0^l(j)\right| > 100
    \quad \text{and} \quad
    \max_j \left|y_0^l(j)\right| > 10 \cdot \operatorname{median}_j \left|y_0^l(j)\right|. 
\end{align}

Given the spike FFN, we then identify MAGC from its inputs. Let
$\boldsymbol{h}_0$ and $\boldsymbol{h}_1$ be the average inputs to the spike FFN for the first and second tokens, respectively. The gating channel should be the channel where the two inputs deviate the most. We therefore locate MAGC as
\begin{align}
\label{eq:magc_criterion}
    i^* = \arg\max_j \left| h_0(j) - h_1(j) \right|,
\end{align}
and determine the gating direction by the sign of $h_0(i^*)-h_1(i^*)$: a positive (negative) sign indicates that massive activations emerge when the value of MAGC becomes sufficiently large (small). 

This appendix reports the detailed results for the models summarized in Table \ref{tab:model_sum}: LLaMA-2-7B (Figure \ref{fig:llama2}), Qwen2.5-1.5B (Figure \ref{fig:qwen}), QwQ-32B (Figure \ref{fig:qwq}), Ministral-3-3B-Instruct-2512 (Figure \ref{fig:ministral}), and DeepSeek-R1-Distill-Llama-8B (Figure \ref{fig:deepseek-llama}). For each model, we present (i) the input $\boldsymbol{h}$ and outputs $\boldsymbol{y}$ of the spike FFN for a random input sequence, (ii) the coefficient vector $\boldsymbol{\alpha}$ defined in Theorem~4.1, where for models with a negative gating direction $\boldsymbol{\alpha}$ is computed under the limit $x\to -\infty$ and the condition on $W_{\mathrm{gate}}^{(j,i)}$ flips sign accordingly, and (iii) the dominant columns (the columns most aligned with the output $\boldsymbol{y}_0$, as reported in Table~\ref{tab:model_sum}) together with a randomly selected column for comparison.

\newpage 

\begin{figure}[p]
    \centering
    
    \hfill
    \begin{subfigure}[b]{0.48\textwidth}
        \centering
        \includegraphics[width=\linewidth]{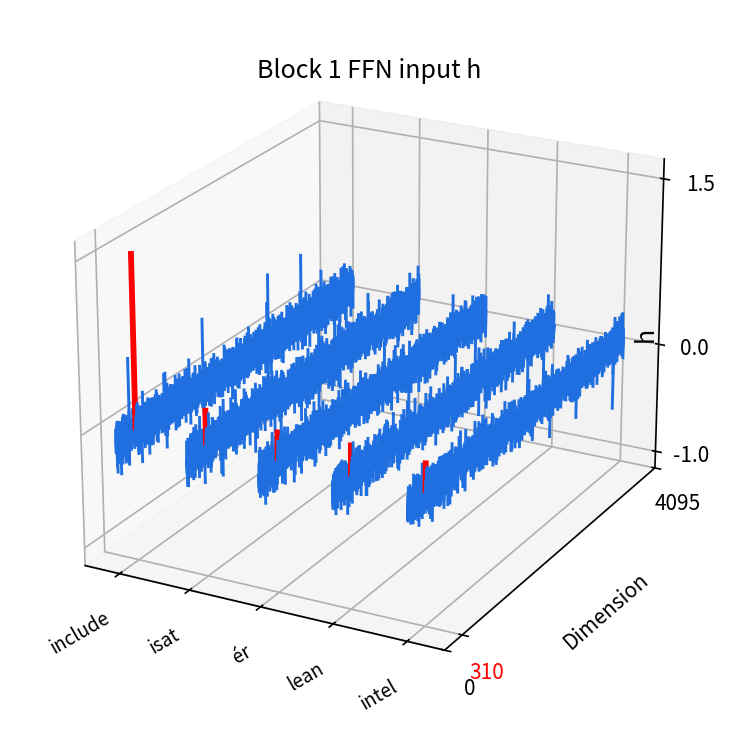}
        \caption{Input $\boldsymbol{h}$ to FFN.}
    \end{subfigure}
    \hfill
    \begin{subfigure}[b]{0.48\textwidth}
        \centering
        \includegraphics[width=\linewidth]{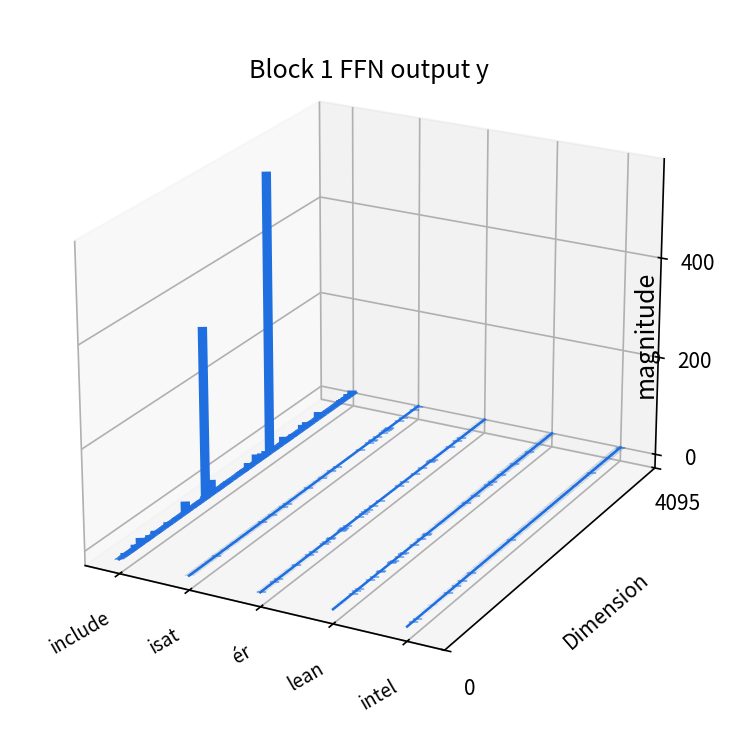}
        \caption{Output $\boldsymbol{y}$ of FFN.}
    \end{subfigure}
    

    \hfill
    \begin{subfigure}[b]{0.48\textwidth}
        \centering
        \includegraphics[width=\linewidth]{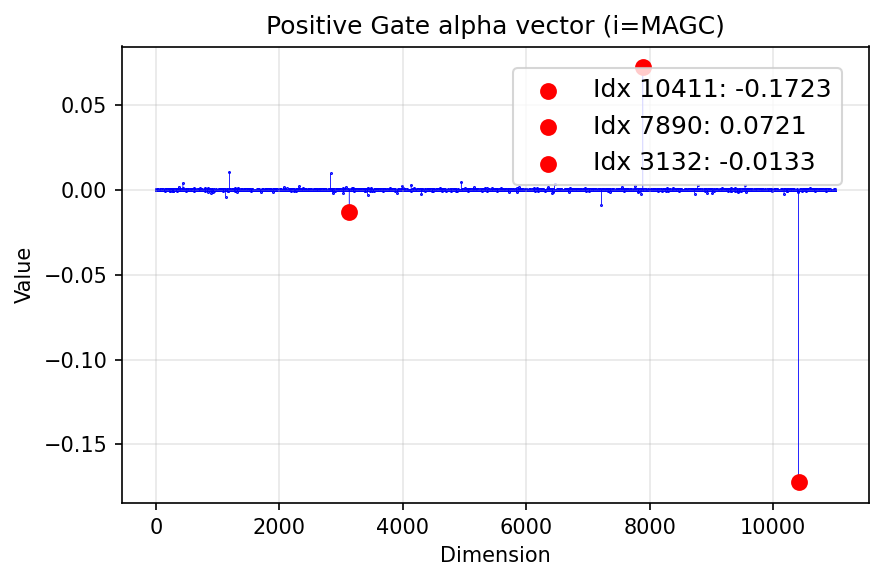}
        \caption{$\alpha$ vector when $i^*=310$.}
    \end{subfigure}
    \hfill
    \begin{subfigure}[b]{0.48\textwidth}
        \centering
        \includegraphics[width=\linewidth]{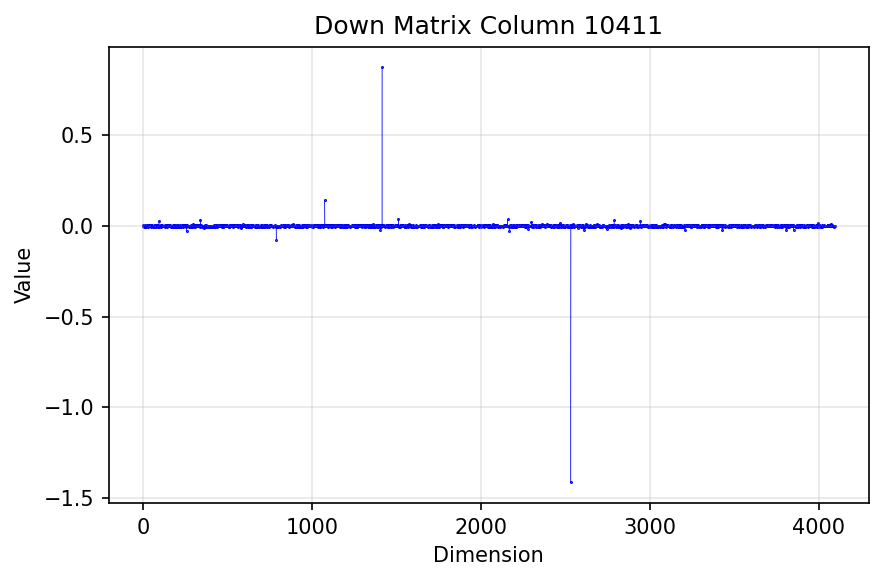}
        \caption{$\boldsymbol{\omega}_{10411}$.}
    \end{subfigure}

    \hfill
    \begin{subfigure}[b]{0.48\textwidth}
        \centering
        \includegraphics[width=\linewidth]{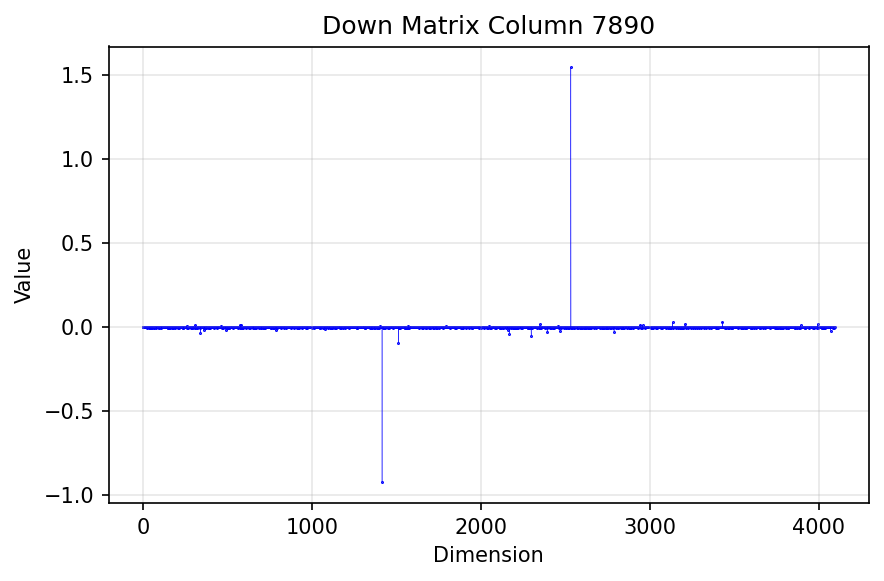}
        \caption{$\boldsymbol{\omega}_{7890}$.}
    \end{subfigure}
    \hfill
    \begin{subfigure}[b]{0.48\textwidth}
        \centering
        \includegraphics[width=\linewidth]{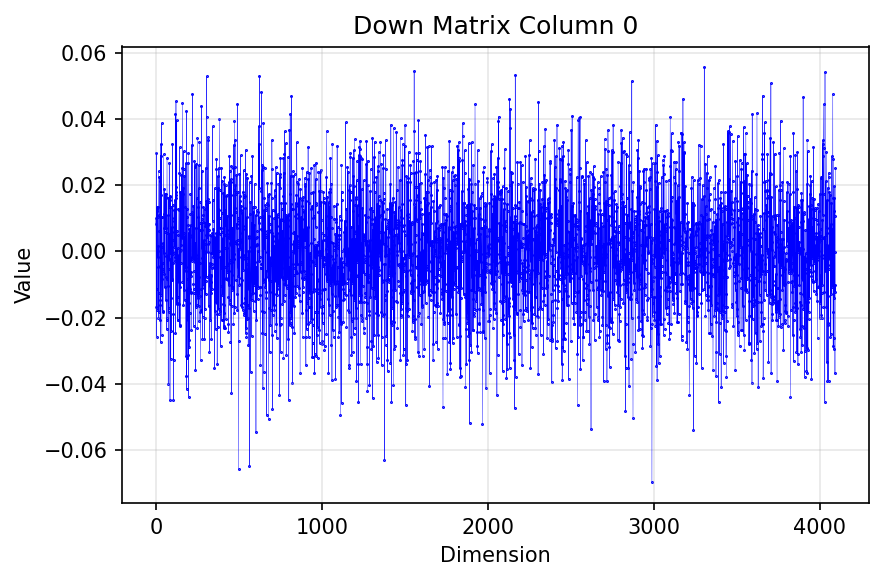}
        \caption{$\boldsymbol{\omega}_{0}$.}
    \end{subfigure}

    \caption{Results for LLaMA-2-7b-hf.}
    \label{fig:llama2}
\end{figure}


\begin{figure}[p]
    \centering
    

    \hfill
    \begin{subfigure}[b]{0.48\textwidth}
        \centering
        \includegraphics[width=\linewidth]{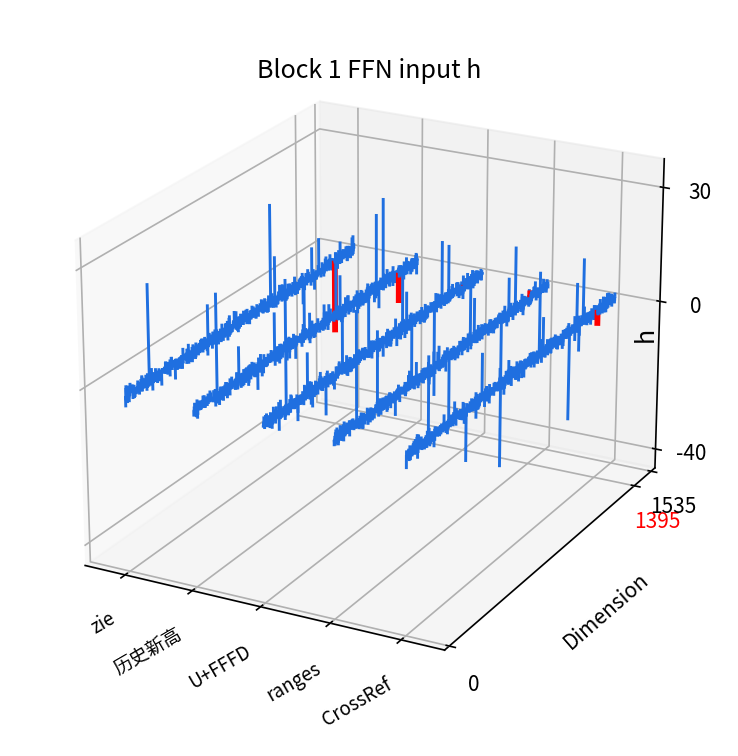}
        \caption{Input $\boldsymbol{h}$ to FFN.}
    \end{subfigure}
    \hfill
    \begin{subfigure}[b]{0.48\textwidth}
        \centering
        \includegraphics[width=\linewidth]{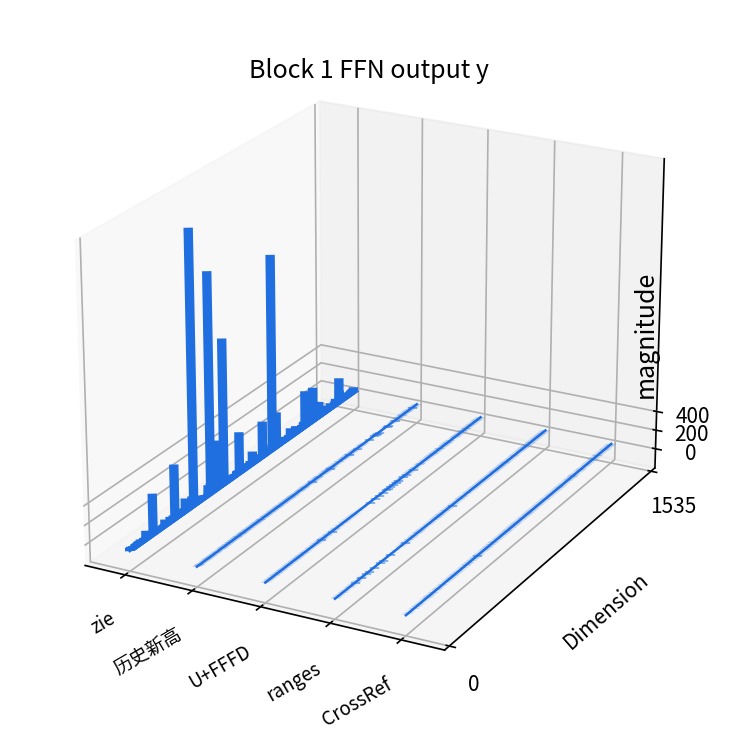}
        \caption{Output $\boldsymbol{y}$ of FFN.}
    \end{subfigure}

    \hfill
    \begin{subfigure}[b]{0.48\textwidth}
        \centering
        \includegraphics[width=\linewidth]{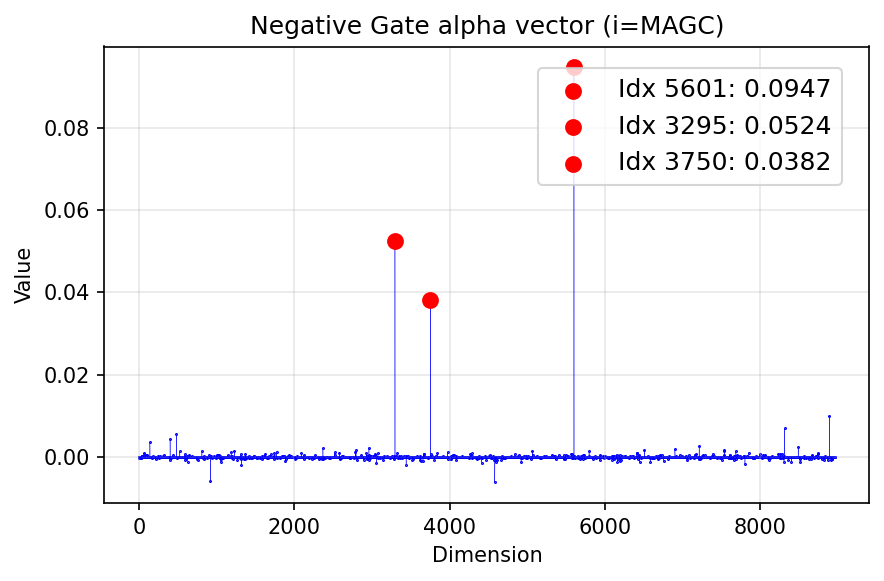}
        \caption{$\alpha$ vector when $i^*=1395$.}
    \end{subfigure}
    \hfill
    \begin{subfigure}[b]{0.48\textwidth}
        \centering
        \includegraphics[width=\linewidth]{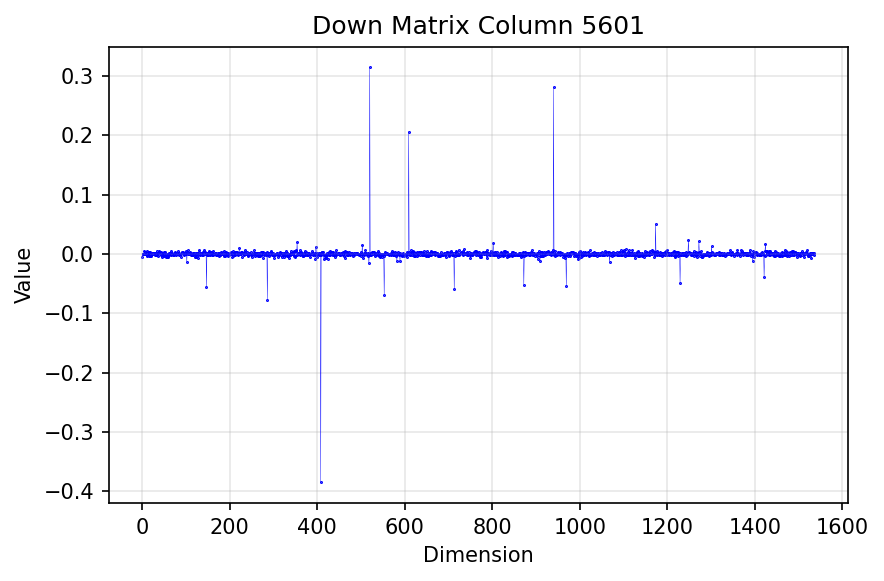}
        \caption{$\boldsymbol{\omega}_{5601}$.}
    \end{subfigure}

    \hfill
    \begin{subfigure}[b]{0.48\textwidth}
        \centering
        \includegraphics[width=\linewidth]{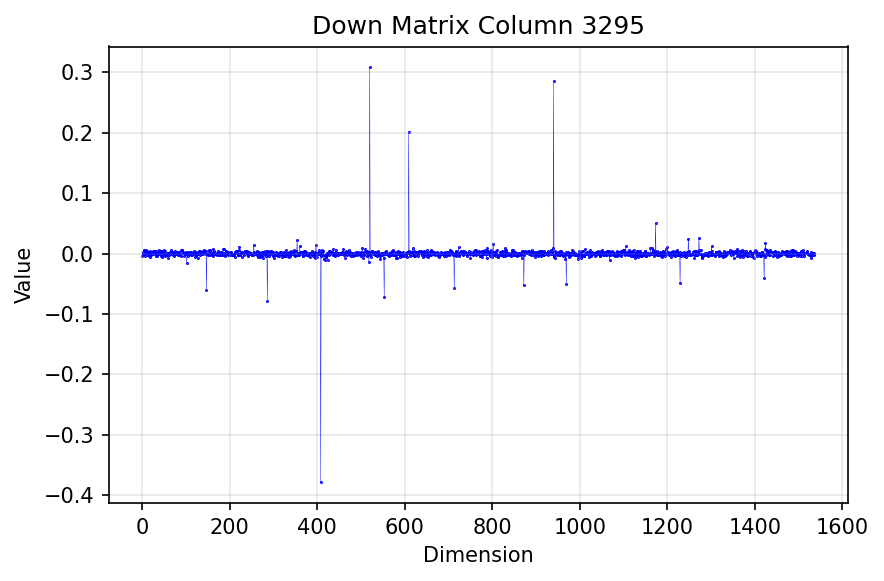}
        \caption{$\boldsymbol{\omega}_{3295}$.}
    \end{subfigure}
    \hfill
    \begin{subfigure}[b]{0.48\textwidth}
        \centering
        \includegraphics[width=\linewidth]{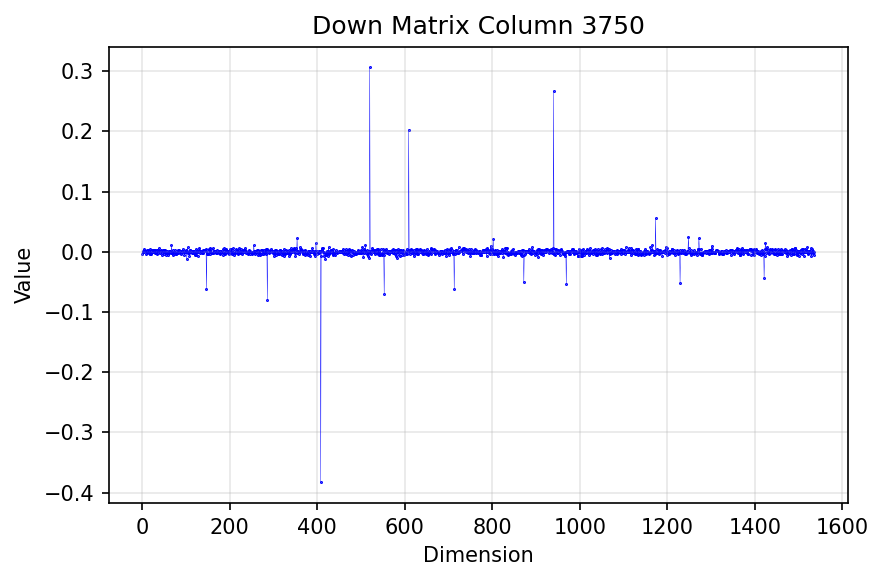}
        \caption{$\boldsymbol{\omega}_{3750}$.}
    \end{subfigure}

    \caption{Results for Qwen2.5-1.5b.}
    \label{fig:qwen}
\end{figure}

\begin{figure}[p]
    \centering
    

    \hfill
    \begin{subfigure}[b]{0.48\textwidth}
        \centering
        \includegraphics[width=\linewidth]{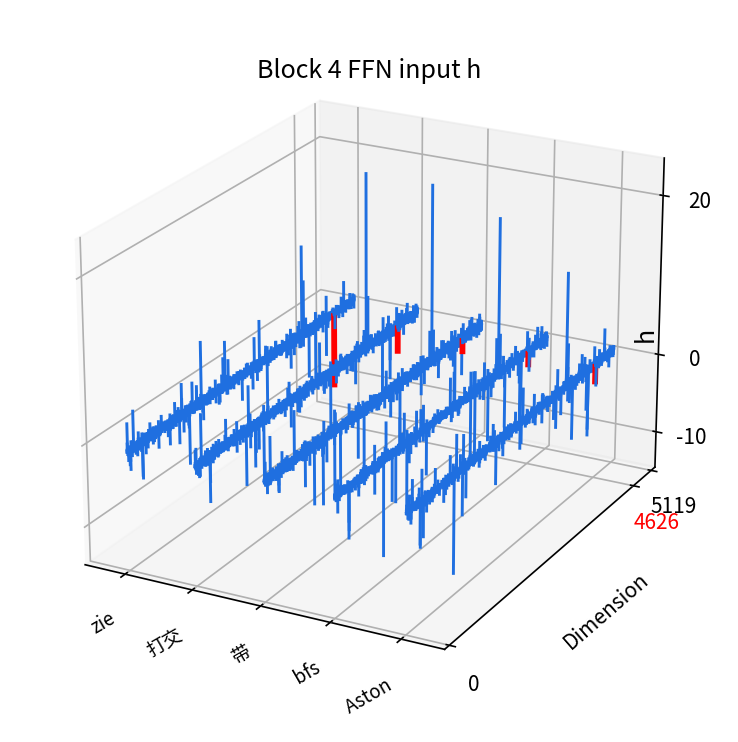}
        \caption{Input $\boldsymbol{h}$ to FFN.}
    \end{subfigure}
    \hfill
    \begin{subfigure}[b]{0.48\textwidth}
        \centering
        \includegraphics[width=\linewidth]{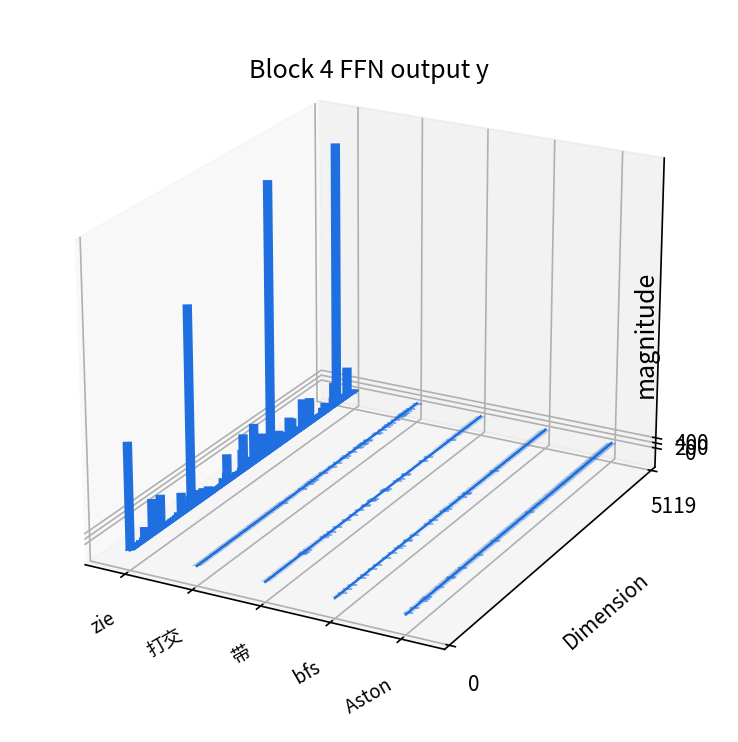}
        \caption{Output $\boldsymbol{y}$ of FFN.}
    \end{subfigure}

    \hfill
    \begin{subfigure}[b]{0.48\textwidth}
        \centering
        \includegraphics[width=\linewidth]{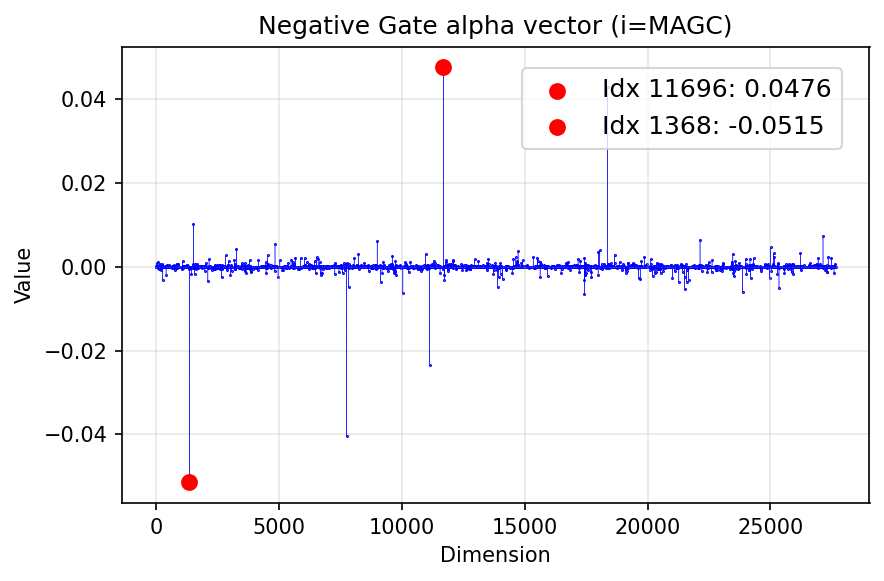}
        \caption{$\alpha$ vector when $i^*=4626$.}
    \end{subfigure}
    \hfill
    \begin{subfigure}[b]{0.48\textwidth}
        \centering
        \includegraphics[width=\linewidth]{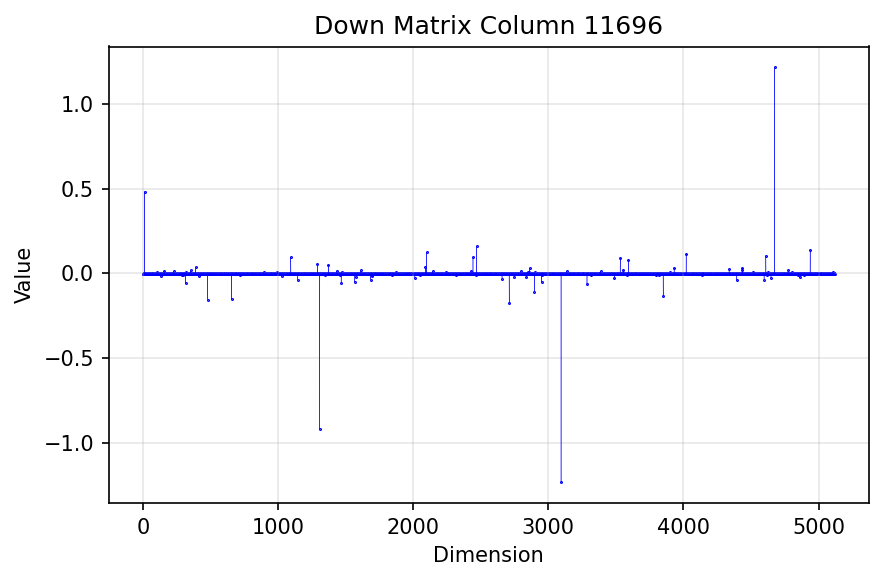}
        \caption{$\boldsymbol{\omega}_{11696}$.}
    \end{subfigure}

    \hfill
    \begin{subfigure}[b]{0.48\textwidth}
        \centering
        \includegraphics[width=\linewidth]{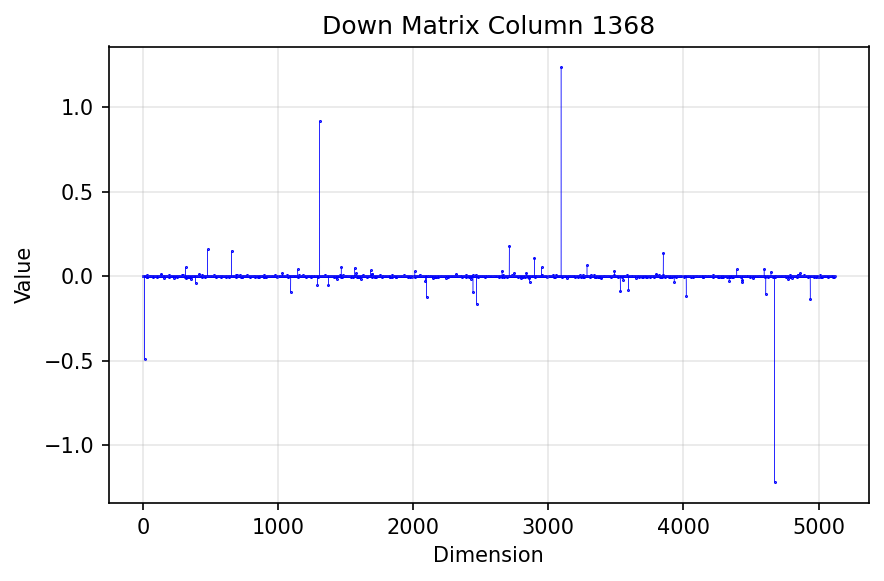}
        \caption{$\boldsymbol{\omega}_{1368}$.}
    \end{subfigure}
    \hfill
    \begin{subfigure}[b]{0.48\textwidth}
        \centering
        \includegraphics[width=\linewidth]{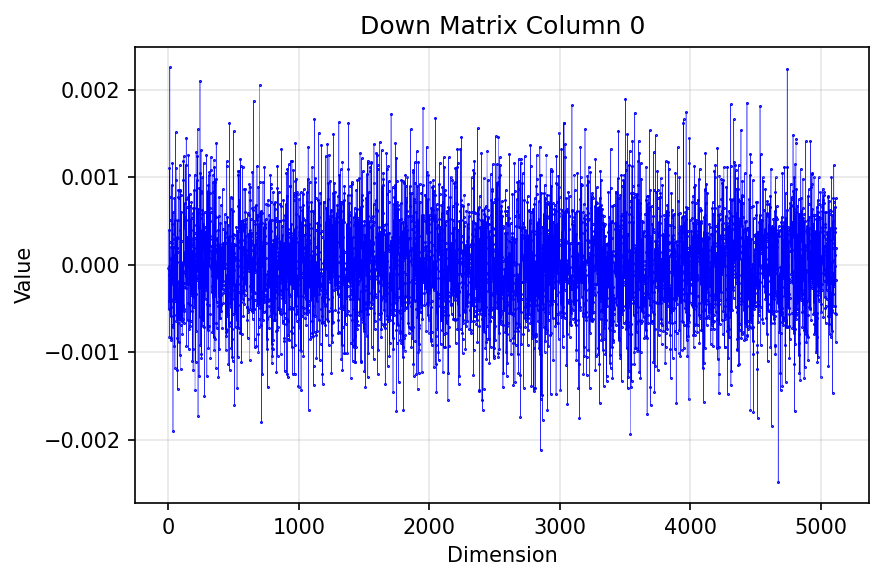}
        \caption{$\boldsymbol{\omega}_{0}$.}
    \end{subfigure}
    
    \caption{Results for QwQ-32B. For QwQ-32B, several of the channels with the largest input difference $|h_0(j)-h_1(j)|$ coincide with massive-activation channels propagated through the residual stream; we therefore identify the MAGC ($i^*=4626$, negative gating direction) by traversing the ten channels with the largest such difference following Equation~\ref{eq:magc_criterion} and selecting the first one that is not itself a massive-activation channel, rather than simply taking the single largest. }

    \label{fig:qwq}
\end{figure}

\begin{figure}[p]
    \centering
    

    \hfill
    \begin{subfigure}[b]{0.48\textwidth}
        \centering
        \includegraphics[width=\linewidth]{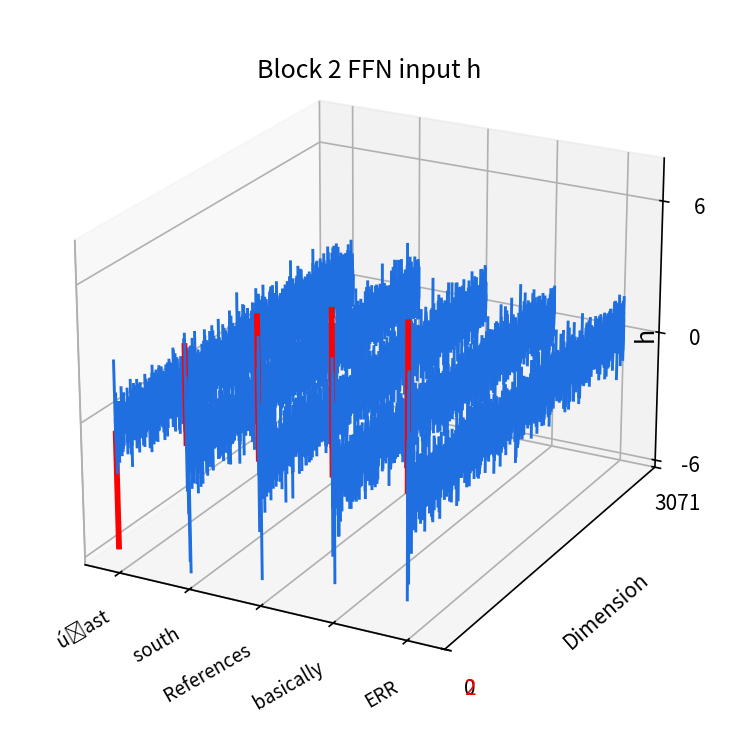}
        \caption{Input $\boldsymbol{h}$ to FFN.}
    \end{subfigure}
    \hfill
    \begin{subfigure}[b]{0.48\textwidth}
        \centering
        \includegraphics[width=\linewidth]{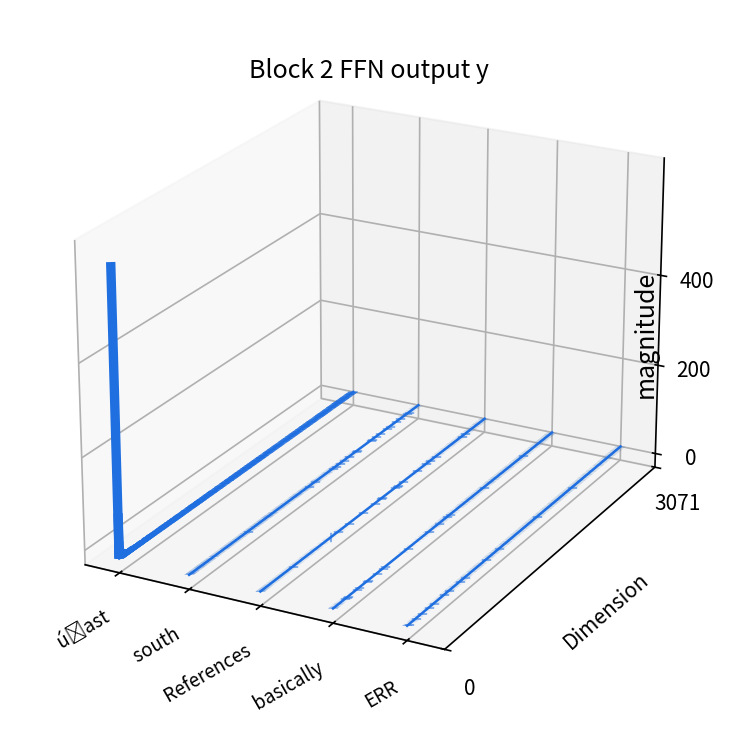}
        \caption{Output $\boldsymbol{y}$ of FFN.}
    \end{subfigure}

    \hfill
    \begin{subfigure}[b]{0.48\textwidth}
        \centering
        \includegraphics[width=\linewidth]{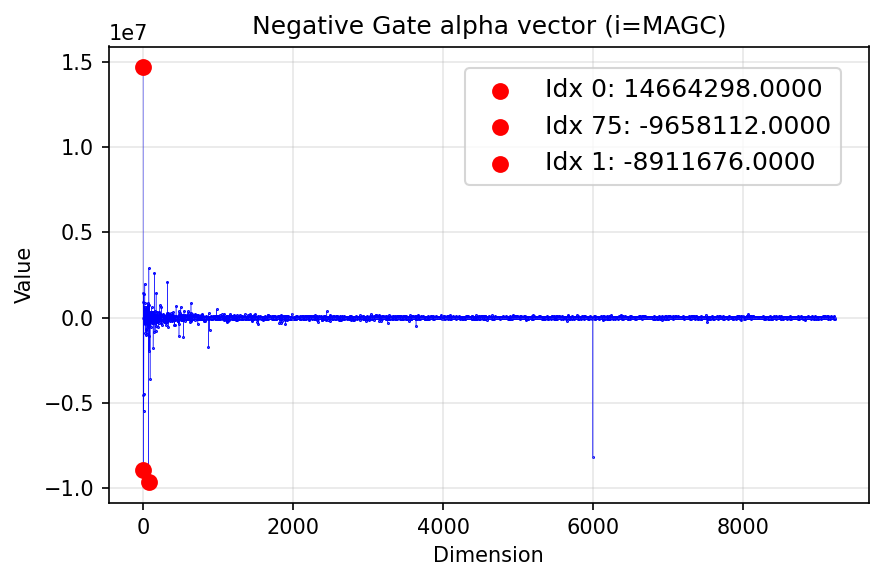}
        \caption{$\alpha$ vector when $i^*=2$.}
    \end{subfigure}
    \hfill
    \begin{subfigure}[b]{0.48\textwidth}
        \centering
        \includegraphics[width=\linewidth]{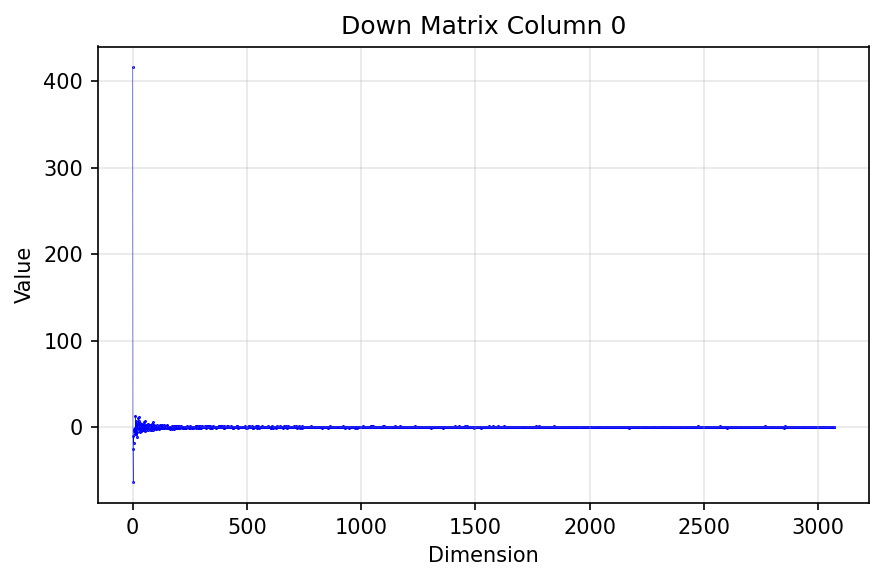}
        \caption{$\boldsymbol{\omega}_{0}$.}
    \end{subfigure}

    \hfill
    \begin{subfigure}[b]{0.48\textwidth}
        \centering
        \includegraphics[width=\linewidth]{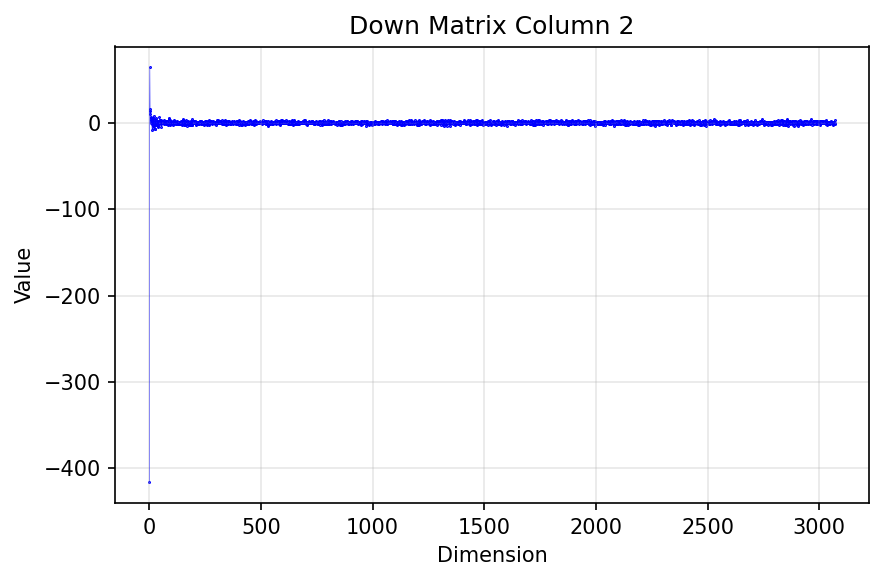}
        \caption{$\boldsymbol{\omega}_{2}$.}
    \end{subfigure}
    \hfill
    \begin{subfigure}[b]{0.48\textwidth}
        \centering
        \includegraphics[width=\linewidth]{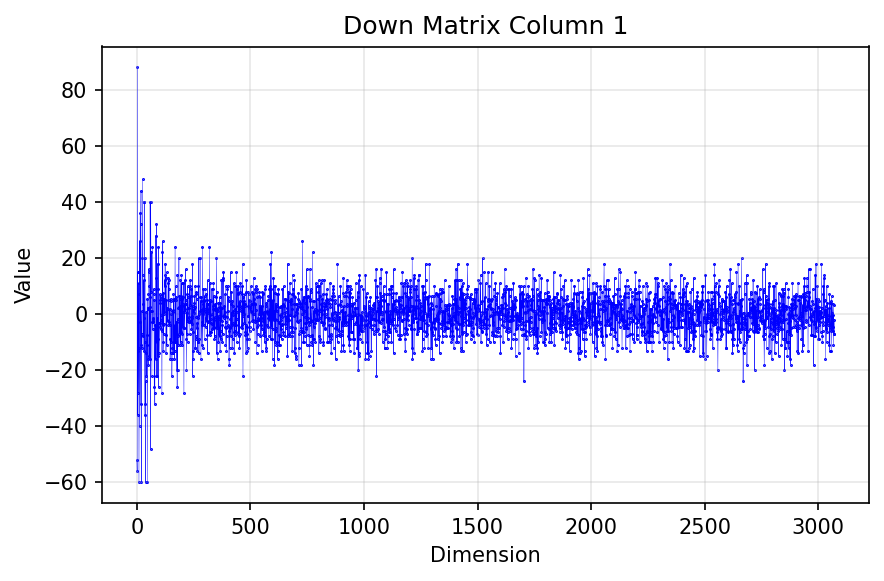}
        \caption{$\boldsymbol{\omega}_{1}$.}
    \end{subfigure}
    
    \caption{Results for Ministral-3-3B-Instruct-2512. Note that columns 0 and 2 belong to $j^*$. Instead of choosing 0 as the random column for comparison, we select column 1. }
    \label{fig:ministral}
\end{figure}

\begin{figure}[p]
    \centering
    

    \hfill
    \begin{subfigure}[b]{0.48\textwidth}
        \centering
        \includegraphics[width=\linewidth]{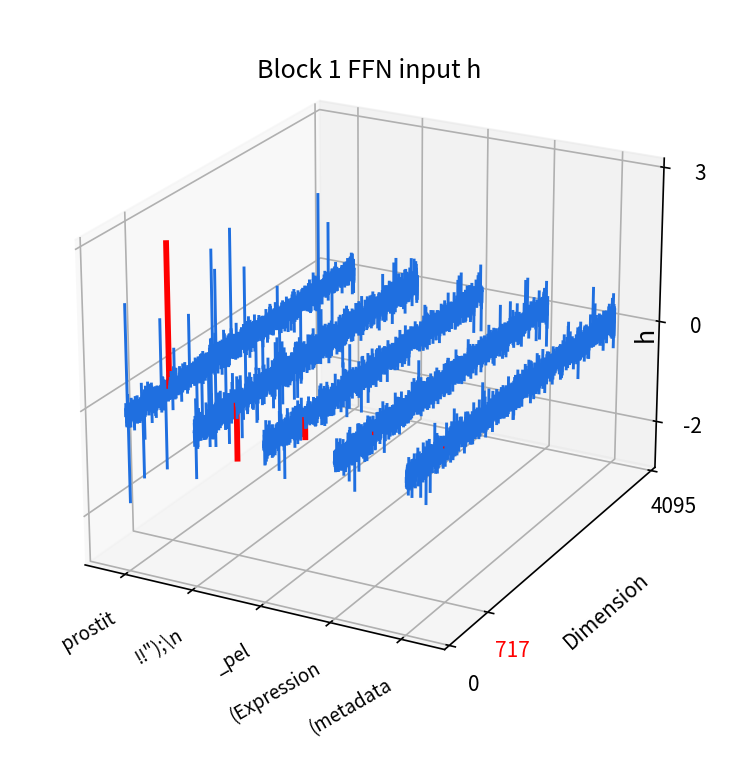}
        \caption{Input $\boldsymbol{h}$ to FFN.}
    \end{subfigure}
    \hfill
    \begin{subfigure}[b]{0.48\textwidth}
        \centering
        \includegraphics[width=\linewidth]{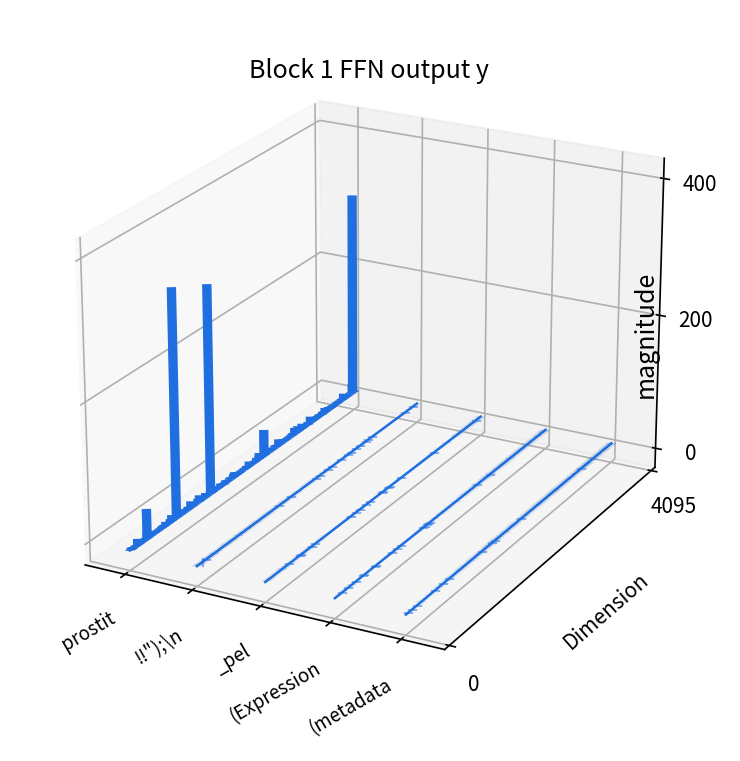}
        \caption{Output $\boldsymbol{y}$ of FFN.}
    \end{subfigure}

    \hfill
    \begin{subfigure}[b]{0.48\textwidth}
        \centering
        \includegraphics[width=\linewidth]{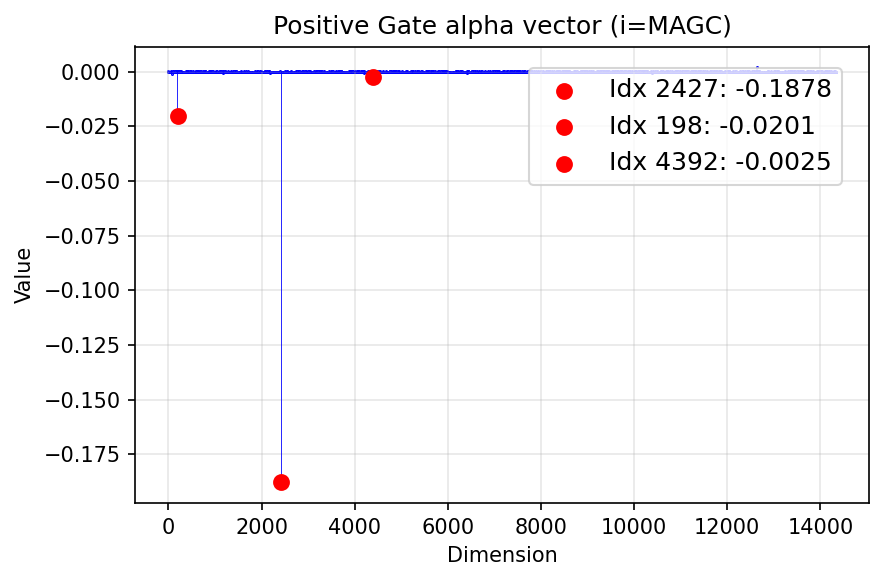}
        \caption{$\alpha$ vector when $i^*=717$.}
    \end{subfigure}
    \hfill
    \begin{subfigure}[b]{0.48\textwidth}
        \centering
        \includegraphics[width=\linewidth]{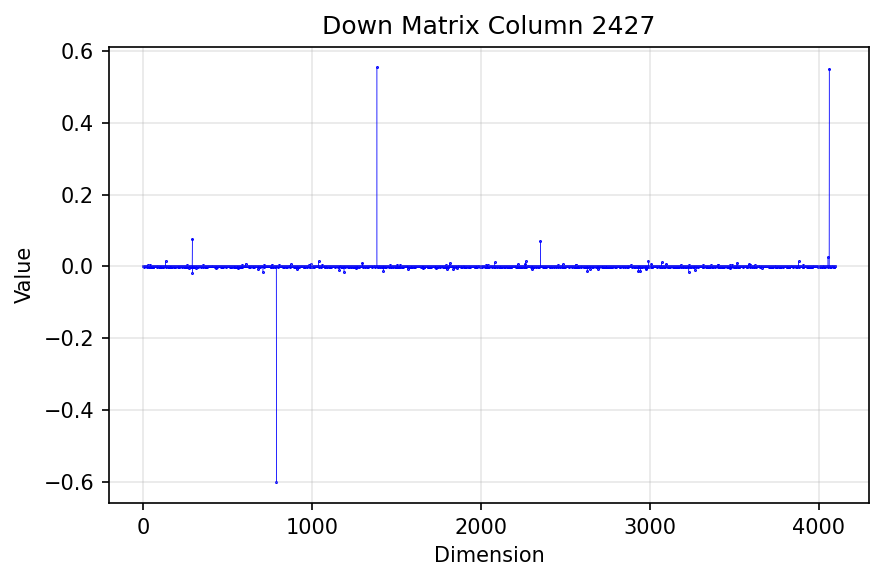}
        \caption{$\boldsymbol{\omega}_{2427}$.}
    \end{subfigure}

    \hfill
    \begin{subfigure}[b]{0.48\textwidth}
        \centering
        \includegraphics[width=\linewidth]{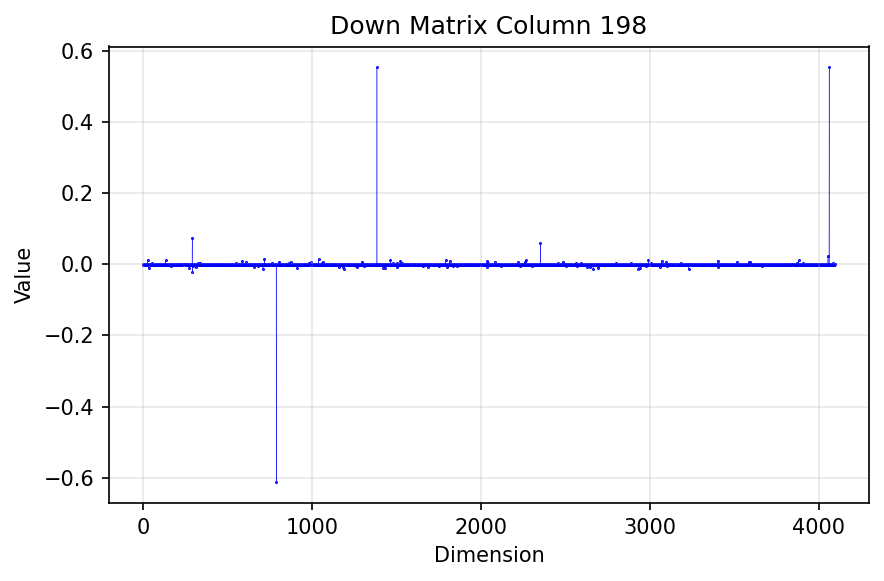}
        \caption{$\boldsymbol{\omega}_{198}$.}
    \end{subfigure}
    \hfill
    \begin{subfigure}[b]{0.48\textwidth}
        \centering
        \includegraphics[width=\linewidth]{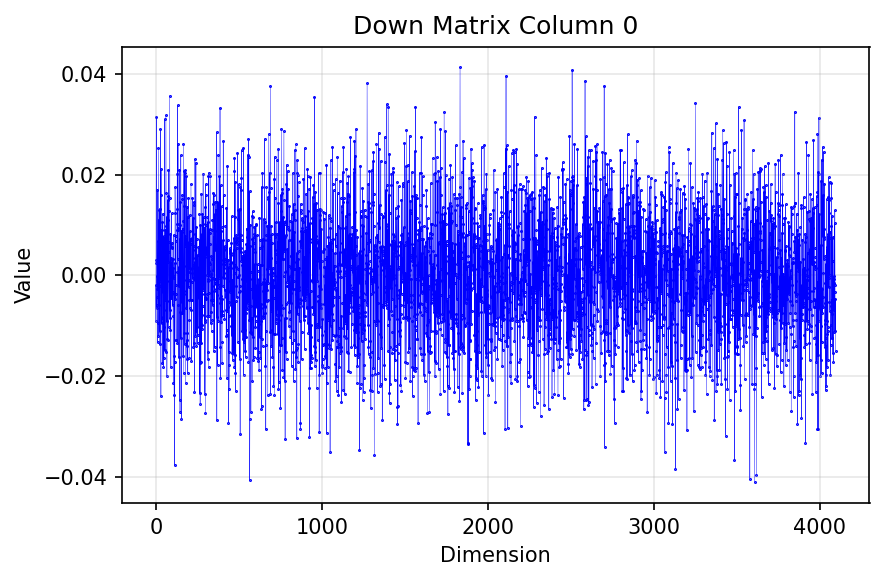}
        \caption{$\boldsymbol{\omega}_{0}$.}
    \end{subfigure}

    \caption{Results for DeepSeek-R1-Distill-Llama-8B.}
    \label{fig:deepseek-llama}
\end{figure}

\end{document}